  \providecommand\BibTeX{{%
    \normalfont B\kern-0.5em{\scshape i\kern-0.25em b}\kern-0.8em\TeX}}}
\theoremstyle{definition}
\newcounter{relctr} 
\everydisplay\expandafter{\the\everydisplay\setcounter{relctr}{0}} 
\newtheorem{assumption}{\bfseries Assumption}
\newtheorem{theorem}{\bfseries Theorem}
\newtheorem*{theorem*}{\bfseries Theorem}
\newtheorem{proposition}{\bfseries Proposition}
\newtheorem*{proposition*}{\bfseries Proposition}
\newtheorem{remark}{\bfseries Remark}
\newtheorem*{sketchofproof}{\bfseries Sketch of proof}
\providecommand{\iprod}[2]{\ensuremath{\left\langle #1,\,#2  \right\rangle}}
\def\R{\mathbb{R}}
\newcommand{\papertitle}{Differential Privacy and Byzantine Resilience in SGD:\\ Do They Add Up?}
\newcommand{\paragraphspaceheight}{0.2cm}
\newcommand{\paragraphspace}{\vspace{\paragraphspaceheight}}
\newcommand{\ldotexp}{, \thinspace \ldots{} \thinspace, \thinspace}
\newcommand{\setr}{\mathbb{R}}
\newcommand{\expect}[1]{\mathop{{}\mathbb{E}}\left[{#1}\right]}
\newcommand{\suchthat}{\ensuremath{~\middle|~}}
\newcommand{\knowing}{\suchthat{}}
\newcommand{\normtwo}[1]{\left\lVert{#1}\right\rVert_{2}}
\newcommand{\norm}[1]{\left\lVert{#1}\right\rVert}
\newcommand{\indexvar}[3]{\ensuremath{{{#3}^{\ifthenelse{\equal{#1}{}}{}{\left({#1}\right)}}_{#2}}}}
\newcommand{\indexvarNoPar}[3]{\ensuremath{{{#3}^{\ifthenelse{\equal{#1}{}}{}{\left{#1}\right}}_{#2}}}}
\newcommand{\params}[2]{\indexvarNoPar{#1}{#2}{w}}
\newcommand{\noise}[2]{\indexvar{#1}{#2}{\boldsymbol{y}}}
\newcommand{\batch}[2]{\indexvar{#1}{#2}{\xi}}
\newcommand{\datapoint}[2]{\indexvar{#1}{#2}{x}}
\newcommand{\mechanismNoArg}[0]{\ensuremath{\mathcal{M}}}
\newcommand{\mechanism}[1]{\ensuremath{\mathcal{M}({#1})}}
\newcommand{\weight}[1]{\params{}{#1}}
\newcommand{\lossname}{\ensuremath{Q}}
\newcommand{\realloss}[1]{\ensuremath{\lossname{}\left({#1}\right)}}
\newcommand{\compgrad}[2]{\ensuremath{\nabla{}\lossname{}\left({#1}, {#2}\right)}}
\newcommand{\gradient}[2]{\indexvar{#1}{#2}{g}}
\newcommand{\attack}[1]{\indexvar{}{#1}{a}}
\newcommand{\worker}[2]{\indexvar{#1}{#2}{W}}
\newcommand{\noisygradient}[2]{\indexvar{#1}{#2}{o}}
\newcommand{\proba}[2]{\ensuremath{\text{P}\!\left({#1}\ifthenelse{\equal{#2}{}}{}{\knowing{}{#2}}\right)}}
\newcommand{\brute}{\textit{MDA}}
\newcommand{\krum}{\textit{Krum}}
\newcommand{\bulyan}{\textit{Bulyan}}
\newcommand{\median}{\textit{Median}}
\newcommand{\meamed}{\textit{Meamed}}
\newcommand{\phocas}{\textit{Phocas}}
\newcommand{\trimmed}{\textit{Trimmed Mean}}
\newcommand{\brea}{\textit{BREA}}
\newcommand{\includeplot}[2]{\includegraphics[width=#1,trim={6mm 6mm 6mm 6mm},clip]{plots/svm-phishing-simples-logit-#2.png}}
\renewcommand{\paragraph}[1]{\textbf{#1}~}
\DeclareMathOperator{\E}{\mathbb{E}}
\begin{document}
\fancyhead{}

\title{\papertitle{}}

\author{Rachid Guerraoui}
\email{rachid.guerraoui@epfl.ch}
\affiliation{
  \institution{Ecole Polytechnique Fédérale de Lausanne (EPFL)}
  \city{Lausanne}
  \country{Switzerland}
}

\author{Nirupam Gupta}
\email{nirupam.gupta@epfl.ch}
\affiliation{
  \institution{Ecole Polytechnique Fédérale de Lausanne (EPFL)}
  \city{Lausanne}
  \country{Switzerland}
}

\author{Rafaël Pinot}
\email{rafael.pinot@epfl.ch}
\affiliation{
  \institution{Ecole Polytechnique Fédérale de Lausanne (EPFL)}
  \city{Lausanne}
  \country{Switzerland}
}

\author{Sébastien Rouault}
\email{sebastien.rouault@epfl.ch}
\affiliation{
  \institution{Ecole Polytechnique Fédérale de Lausanne (EPFL)}
  \city{Lausanne}
  \country{Switzerland}
}

\author{John Stephan}
\email{john.stephan@epfl.ch}
\affiliation{%
  \institution{Ecole Polytechnique Fédérale de Lausanne (EPFL)}
  \city{Lausanne}
  \country{Switzerland}
}


\begin{abstract}
This paper addresses the problem of combining Byzantine resilience with privacy in machine learning (ML).
Specifically, we study if a distributed implementation of the renowned Stochastic Gradient Descent (SGD) learning algorithm is feasible with \textit{both} differential privacy (DP) and $(\alpha,f)$-Byzantine resilience. 
To the best of our knowledge, this is the first work to tackle this problem from a theoretical point of view. A key finding of our analyses is that the classical approaches to these two (seemingly) orthogonal issues are incompatible. 
More precisely, we show that a direct composition of these techniques makes the guarantees of the resulting SGD algorithm depend unfavourably upon the number of parameters of the ML model, making the training of large models practically infeasible.
We validate our theoretical results through numerical experiments on publicly-available datasets; showing that it is impractical to ensure DP and Byzantine resilience simultaneously.
\end{abstract}


\begin{CCSXML}
<ccs2012>
   <concept>
       <concept_id>10002978.10002991.10002995</concept_id>
       <concept_desc>Security and privacy~Privacy-preserving protocols</concept_desc>
       <concept_significance>300</concept_significance>
       </concept>
   <concept>
       <concept_id>10002950.10003714.10003716.10011138</concept_id>
       <concept_desc>Mathematics of computing~Continuous optimization</concept_desc>
       <concept_significance>500</concept_significance>
       </concept>
 </ccs2012>
\end{CCSXML}

\ccsdesc[300]{Security and privacy~Privacy-preserving protocols}
\ccsdesc[500]{Mathematics of computing~Continuous optimization}

\keywords{Machine learning, Differential privacy, Byzantine resilience, SGD}

\maketitle

\section{Introduction}
The massive amounts of data generated daily call for distributed machine learning (ML). Essentially, different nodes collaborate to train a joint model on a collective dataset.
Clearly, an aggregate model would be more accurate than individually-trained models on small subsets of data.
However, two reasons prevent the explicit sharing of personal data. Firstly, in many classification tasks, training data is sensitive and should remain private, e.g., financial and medical fields. Secondly, datasets can be quite large (e.g., Open Images \cite{openimages}, ImageNet \cite{imagenet}) and their sharing computationally expensive.\\

\noindent
The most popular scheme to train ML models in a distributed setting is Stochastic Gradient Descent (\textbf{SGD}) \cite{SGD}: it enables to train the aggregate model by simply exchanging gradients of the loss function (instead of the training data itself). Underlying SGD lies an iterative method to optimize the objective function $Q(w)$ by stochastically estimating the gradient $\nabla Q(w)$ and then computing a gradient descent step on $w$. There are several system models for distributed SGD training, such as the parameter server~\cite{paramServer} and ring all-reduce models~\cite{reduce}. The parameter server model is one of the most adopted distributed learning topologies (Fig.~\ref{fig:sysmodel}), where nodes send their gradients to a central trusted entity, namely the \textbf{parameter server}, responsible of updating the model parameters by aggregating the received gradients. The parameter server model is also the backbone of the popular setting in distributed learning today, \textit{Federated Learning}~\cite{federated}. Averaging the received gradients is typically used by the parameter server as aggregation method~\cite{averaging}, assuming the nodes correctly compute unbiased estimates of the gradient. However, releasing gradients in a distributed framework results in the emergence of two orthogonal threats: Byzantine gradients and data leakage.\\

\noindent
\paragraph{Byzantine Gradients} The learning can be critically influenced by Byzantine gradients (i.e., vectors that are not unbiased estimates of the true gradient) sent by the nodes during the training \cite{little, empire}. We call these gradients, as well as the nodes that send them, \textit{Byzantine}. We can distinguish two types of Byzantine gradients: \textbf{erroneous} gradients that correspond to arbitrary failures during the gradient computation (e.g., software bugs, loss of precision, mislabeling in local dataset, network asynchrony), and \textbf{malicious} gradients which are forged vectors sent by malicious participants in an attempt to poison the learning. In both cases, the injected Byzantine gradients can prevent the collective model from converging to a \textit{satisfying state}, i.e., a final accuracy that is \textit{comparable} to the accuracy resulting from a training with no Byzantine workers.

With Byzantine nodes hindering the training, simply averaging the received gradients prior to updating the global model parameters is not Byzantine-resilient. Consequently, several gradient aggregation rules (GAR), such as \krum{} \cite{krum}, \brute{} \cite{MDA}, or \median{} \cite{median} have been designed to tolerate a certain threshold of Byzantine nodes in the network.\\

\noindent
\paragraph{Data Privacy} ML models are known to leak information about their training data \cite{shokri, Mlleaks, DLG}. In the context of distributed SGD training, despite the existence of several network architectures, gradients are typically exchanged in the clear (unencrypted) between the different nodes, potentially causing data leaks. Zhu et al. recently developed an attack~\cite{DLG} confirming that gradients inherently contain information about the training samples that a \textbf{curious parameter server} can exploit to violate the privacy of data nodes. Several privacy-preserving implementations of the distributed SGD scheme have been developed. Among them, the dominant technique is to inject Gaussian or Laplace noise to the gradients to ensure differential privacy (DP)~\cite{DP_SGD_Fed1, DP_SGD_Fed2}. In addition to being the gold standard for data privacy in the ML community, DP enables us to theoretically study the privacy guarantees it provides in the presence of a \textbf{curious parameter server}, thanks to two privacy parameters $\epsilon$ and $\delta$. Reasonable privacy budgets $(\epsilon,\delta)$ typically lie in the range $(0,1)^2$ \cite{dwork2014algorithmic}.\\

\begin{figure}
\centering
\subfigure[Trusted Parameter Server With 8 (Honest) Workers]{\label{fig:sysmodel}\includegraphics[width=60mm]{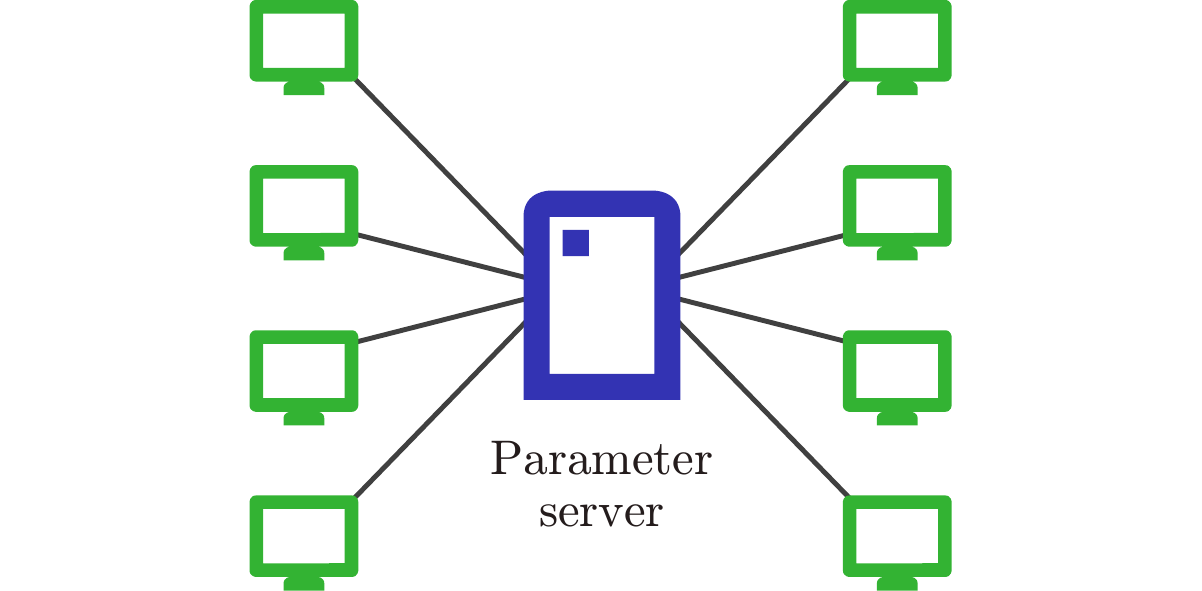}}
\subfigure[Honest-but-Curious Parameter Server With 8 Workers, Including 3 Byzantine Nodes]{\label{fig:sysmodel2}\includegraphics[width=60mm]{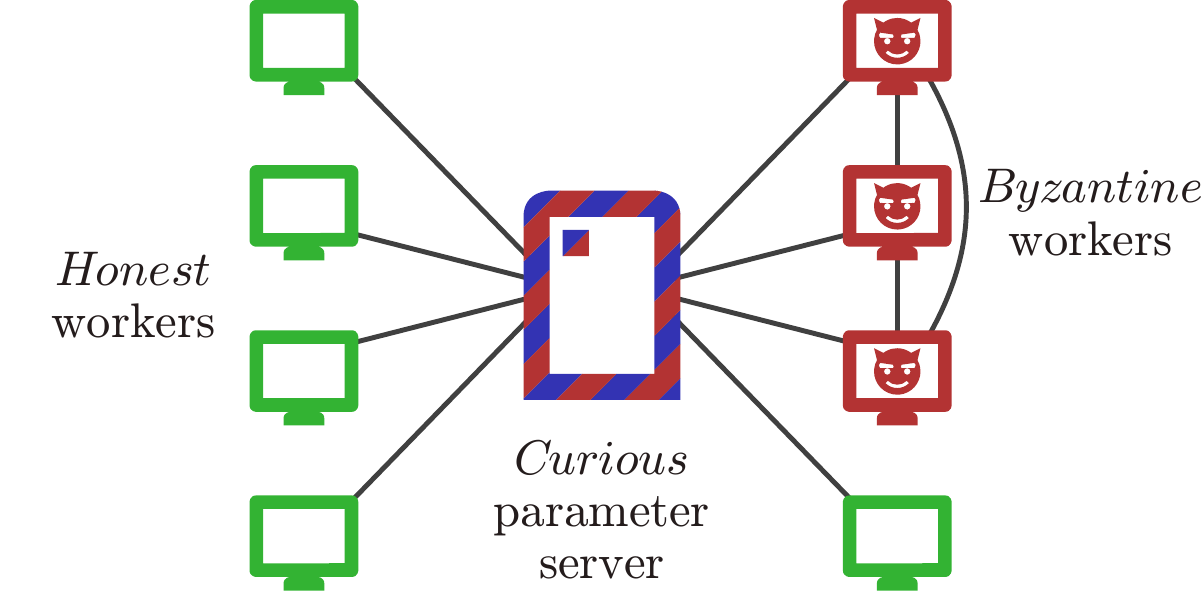}}
\caption{Parameter Server Model}
\end{figure}

\noindent
\paragraph{Problem Statement}
In a distributed SGD framework, the two aforementioned threats have been mainly tackled separately. 
We are the first to theoretically study the possibility of satisfying both $(\alpha,f)$-Byzantine resilience~\cite{krum} and DP in a decentralized SGD system.
In this paper, we study the possibility of combining $(\alpha,f)$-Byzantine resilience with DP via noise injection. To this end, we adopt the (now classical) parameter server model with a total of $n$ workers among which a maximum of $f \leq n$ workers can be Byzantine and may collude. Furthermore, we consider the parameter server to be \textbf{honest-but-curious}, meaning that it correctly computes aggregate gradients in each step of the training but it can also use the received gradients to violate the privacy of the honest workers. The resulting framework is illustrated in Fig.~\ref{fig:sysmodel2}. 
At first glance, combining $(\alpha,f)$-Byzantine resilience with DP by noise injection in distributed SGD seems like a simple extension of the Byzantine-resilient scenario with no privacy. In fact, it does not require any modification to the classical distributed SGD protocol or even the infrastructure of the parameter server model (no encryption protocol, no key distribution, no secret sharing, etc.). Furthermore, the noise is injected locally by every honest worker. Since there are existing solutions for $(\alpha,f)$-Byzantine resilient SGD, controlling the levels of noise added to the gradients should be sufficient to enforce Byzantine resilience while still achieving moderate but acceptable levels of privacy. However, our results show that $(\alpha,f)$-Byzantine resilience and DP do not add up.\\

\noindent
\paragraph{Contributions}
We show that combining the two orthogonal notions, DP and $(\alpha,f)$-Byzantine resilience, depends on the number of parameters (denoted by $d$) of the learning model. In short, we show that when $d$ is large, DP and Byzantine resilience do {\bf not} fit well together. We first study the general case when the cost function, associated with the learning problem, may be non-convex. In this particular case, we show that in order to guarantee $(\alpha, f)$-Byzantine resilience when DP noise is injected, either the batch size must grow linearly with $\sqrt{d}$, or the fraction of Byzantine workers in the system $\frac{f}{n}$ must decrease with $\sqrt{d}$. Given that $d$ is indeed one of the largest variables in contemporary learning problems~\cite{bottou2018optimization, lecun1998gradient, simard2003best}, and that certain state-of-the-art models often exceed 100 M parameters~\cite{DBLP:journals/corr/ZagoruykoK16}, this dependence on $d$ clearly highlights the impracticality of satisfying both DP and Byzantine resilience. Our results apply to the most well-known statistically-robust and $(\alpha, f)$-Byzantine resilient GARs such as \brute{}~\cite{MDA}, \krum{}~\cite{krum}, etc. For example, for \brute{}, we state the following result.

\begin{proposition*}[Informal]
    Let $(\epsilon,\delta) \in (0,1)^2$ be the privacy budget used to inject DP noise to the gradients, and $b$ be the batch size used. Then, we can only guarantee $(\alpha, f)$-Byzantine resilience if the fraction of Byzantine nodes is in $O\left(\frac{b}{\sqrt{d} + b}\right)$.
\end{proposition*}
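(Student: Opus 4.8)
The plan is to pit two opposing requirements against each other. On one hand, enforcing $(\epsilon,\delta)$-DP by Gaussian noise injection forces a \emph{lower} bound on the variance of the honest (noisy) gradients that scales with $\sqrt{d}$. On the other hand, the $(\alpha,f)$-Byzantine resilience of \brute{} demands an \emph{upper} bound on that same variance, measured relative to the norm of the true gradient, and this upper bound shrinks as the Byzantine fraction $f/n$ grows. The whole argument amounts to writing both inequalities in terms of the single ratio $\frac{\sigma}{\lVert g\rVert}$, where $\sigma$ is the standard deviation of an honest noisy gradient and $g$ the true gradient, and then checking when they are jointly satisfiable; solving the resulting constraint for $f/n$ produces the claimed rate.

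First I would quantify the DP side. With per-sample gradient clipping to $\ell_2$-norm $C$ and a batch of size $b$, the $\ell_2$-sensitivity of the batch-averaged gradient is of order $C/b$, since altering one sample changes the averaged clipped gradient by at most $O(C/b)$. The Gaussian mechanism then requires a per-coordinate noise standard deviation of at least $\frac{C}{b\epsilon}\sqrt{2\ln(1.25/\delta)}$. Because DP must mask the \emph{entire} $d$-dimensional vector, independent noise of this magnitude is added to each of the $d$ coordinates, so the injected variance accumulates to order $d\cdot\big(\frac{C}{b\epsilon}\big)^2\ln(1/\delta)$; hence any honest noisy gradient has $\sigma \ge c_1\,\frac{C\sqrt{d}}{b\epsilon}\sqrt{\ln(1/\delta)}$ for an absolute constant $c_1$. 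In parallel, clipping bounds the true gradient by $\lVert g\rVert\le C$ (Jensen's inequality applied to the clipped estimator), so combining the two gives $\frac{\sigma}{\lVert g\rVert}\ge c_1\,\frac{\sqrt{d}}{b\epsilon}\sqrt{\ln(1/\delta)}$.

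Next I would invoke the $(\alpha,f)$-Byzantine resilience condition for \brute{} established earlier. In its standard form this condition caps the relative dispersion of the honest gradients, namely $\frac{\sigma}{\lVert g\rVert}\le c_2\,\frac{n-f}{f}$ up to a constant $c_2$ fixed by $\alpha$ and $n$: the error \brute{} incurs is controlled by the diameter of the honest cluster amplified by a factor of order $f/(n-f)$, and this error must remain below $\lVert g\rVert$ for the aggregate to stay a valid descent direction. Chaining the resilience upper bound with the DP lower bound forces $c_1\,\frac{\sqrt{d}}{b\epsilon}\sqrt{\ln(1/\delta)}\le c_2\,\frac{n-f}{f}$, equivalently $f\big(b\epsilon + c_3\sqrt{d}\sqrt{\ln(1/\delta)}\big)\le c_2\,n\,b\epsilon$. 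Treating the privacy budget $(\epsilon,\delta)\in(0,1)^2$ as a constant and dividing through by $n$ yields $\frac{f}{n}=O\!\big(\frac{b}{\sqrt{d}+b}\big)$, exactly as claimed.

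The crux of the argument, and the step I expect to require the most care, is isolating the $\sqrt{d}$ scaling cleanly. Two points are delicate. First, one must argue that the DP noise variance genuinely accumulates across all $d$ coordinates and cannot be hidden inside the slack the resilience condition tolerates; this is precisely where the $\ell_2$ Gaussian mechanism and the dimensionality $d$ enter, and it is the source of the whole obstruction. Second, one must pin down the exact $\frac{n-f}{f}$ shape of \brute{}'s variance threshold, since it is this ratio that converts into the $\frac{b}{\sqrt{d}+b}$ form once $f/n$ is isolated; a different threshold (e.g. $\frac{n-2f}{f}$) would only change the absolute constants, not the rate. The remaining manipulations---computing the sensitivity, substituting $\lVert g\rVert\le C$, and absorbing the $\epsilon,\delta$ factors into the $O(\cdot)$---are routine.
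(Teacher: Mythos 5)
Your proposal is correct and follows essentially the same route as the paper: lower-bound the variance-to-norm ratio by the DP noise contribution $\sqrt{d}\,G_{\max}\sqrt{\log(1/\delta)}/(b\epsilon)$ divided by the norm bound $G_{\max}$, upper-bound it by \brute{}'s threshold $k_F(n,f)=(n-f)/(\sqrt{8}f)$, and solve the resulting inequality for $f/n$. The paper merely phrases the same chain of inequalities as a contraposition (a sufficient condition for the VN ratio condition to fail), so the two arguments are equivalent.
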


\noindent   
 Next, for a more fine-grained analysis on the impact of DP noise on the Byzantine-resilient SGD scheme, we study a less general setting where the cost function is assumed \textbf{strongly-convex}. In this particular case, we show that for \textbf{any} (existing or non-existing) $(\alpha,f)$-Byzantine resilient GAR, the SGD algorithm with DP noise still suffers from the {\it curse of dimensionality}. Specifically, we show that the training error rate may grow linearly with $d$. On the other hand, the training error rate of the same algorithm without DP noise is independent of $d$.
 
\begin{theorem*}(Informal) The distributed SGD algorithm combining $(\alpha, f)$-Byzantine resilience and $(\epsilon,\delta)$-DP has a training error rate of $\Theta\left( \frac{d \log(1/\delta)}{ T b^2 \epsilon^2} \right)$, where $T$ denotes the number of steps and $b$ is the batch size.
\end{theorem*}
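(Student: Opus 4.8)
The plan is to establish matching upper and lower bounds separately, both driven by the variance that the Gaussian mechanism must inject to guarantee $(\epsilon,\delta)$-DP. First I would recall that to render each honest worker's batch-averaged, clipped gradient $(\epsilon,\delta)$-differentially private one adds isotropic noise $\mathcal{N}(0,\sigma^2 I_d)$ whose per-coordinate variance satisfies $\sigma^2 = \Theta\!\left(\frac{\log(1/\delta)}{b^2\epsilon^2}\right)$ (the clipping threshold being treated as an $O(1)$ constant): the lower bound on $\sigma^2$ is the privacy floor of the Gaussian mechanism, and the upper bound is its sufficiency. Summing over the $d$ coordinates, the trace of the noise covariance — and hence the dominant part of each honest gradient's variance — is $\Theta\!\left(\frac{d\log(1/\delta)}{b^2\epsilon^2}\right)$, whereas the intrinsic stochastic-gradient variance contributes only a $d$-independent $O(1)$ term.

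For the upper bound I would instantiate a concrete $(\alpha,f)$-Byzantine resilient GAR and invoke the moment part of the $(\alpha,f)$-Byzantine resilience definition to bound $\expect{\norm{F-\nabla Q(w_t)}^2}$ by a constant multiple of the per-worker gradient variance computed above. Feeding this into the standard strongly-convex SGD recursion $\expect{\norm{w_{t+1}-w^\star}^2} \le (1-\mu\eta_t)\expect{\norm{w_t-w^\star}^2} + \eta_t^2\,\expect{\norm{F-\nabla Q(w_t)}^2}$, with step size $\eta_t=\Theta(1/t)$, and unrolling over $T$ steps contributes a factor $1/T$, yielding the claimed $O\!\left(\frac{d\log(1/\delta)}{Tb^2\epsilon^2}\right)$.

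The lower bound is the delicate direction, and the place I expect the real work to be. The key is to show that Byzantine resilience forbids the variance-reducing averaging over the $n$ workers that would otherwise shrink the per-worker variance by a factor $n$. I would make this precise with a two-point (Le Cam type) construction: place the true gradient at either $\theta$ or $\theta'$ with $\norm{\theta-\theta'}=\Theta(\sigma)$ and let the $f$ Byzantine workers simulate, in each world, the honest distribution of the other world. Because the honest noise is isotropic Gaussian with per-coordinate standard deviation $\sigma$, the two induced transcripts are statistically indistinguishable, so any GAR — in particular any $(\alpha,f)$-Byzantine resilient one, whose defining condition keeps its expected output correlated with the true gradient — incurs squared error $\Omega(\sigma^2)$ per coordinate. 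Carrying the construction out coordinate-wise (or via a packing of the sphere) accumulates this error across all $d$ directions, giving $\Omega(d\sigma^2)$ for the aggregation error and, after propagation through the same SGD recursion, the matching $\Omega\!\left(\frac{d\log(1/\delta)}{Tb^2\epsilon^2}\right)$.

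The main obstacle is making the indistinguishability construction simultaneously compatible with both parts of the $(\alpha,f)$-Byzantine resilience definition and with the additive-noise privacy mechanism, so that the bound holds for \emph{every} such GAR rather than only selection-type rules such as \krum{}. The crux is verifying that the Byzantine simulation is feasible with only $f$ corrupted workers and that it genuinely cancels the averaging gain — this is exactly what removes the factor $n$ from the rate and exposes the linear-in-$d$ dependence that separates the DP case from the $d$-independent noiseless baseline.
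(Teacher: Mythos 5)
Your upper bound has the right skeleton (per-coordinate noise $s^2 = \Theta(\log(1/\delta)/(b^2\epsilon^2))$, hence an extra $ds^2$ in each honest gradient's variance, then a strongly convex recursion with $\gamma_t = \Theta(1/t)$ to gain the $1/T$), but the specific recursion you invoke is the one for \emph{unbiased} gradient estimators, and that step fails here. Expanding $\norm{w_{t+1}-w^*}^2 = \norm{w_t-w^*}^2 - 2\gamma_t\iprod{F_t}{w_t-w^*} + \gamma_t^2\norm{F_t}^2$, your contraction needs the cross term $\E\iprod{F_t-\nabla Q(w_t)}{w_t-w^*}$ to vanish; an $(\alpha,f)$-Byzantine resilient GAR is \emph{not} unbiased, and condition (1) of the definition only lower-bounds $\iprod{\E\left[F_t\right]}{\nabla Q(w_t)}$, i.e., the correlation with the gradient, not with $w_t-w^*$. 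If you instead bound the bias crudely (Cauchy--Schwarz plus Young), it enters at order $\gamma_t$ rather than $\gamma_t^2$, and since the deviation bound $\E\norm{F_t-\nabla Q(w_t)}^2 \leq c\left(\sigma^2/b + ds^2 + G_{max}^2\right)$ does not shrink over time, the recursion then yields a constant error floor of order $ds^2$, not the claimed $O(ds^2/T)$. The paper avoids exactly this by working with function values: smoothness gives $Q(w_{t+1}) \leq Q(w_t) - \gamma_t\iprod{\nabla Q(w_t)}{F_t} + \frac{\mu}{2}\gamma_t^2\norm{F_t}^2$, so condition (1) plugs directly into the inner product (the ``bias'' is multiplicative in $\norm{\nabla Q(w_t)}^2$ and hence vanishes near the optimum), strong convexity enters only through $\norm{\nabla Q(w_t)}^2 \geq 2\lambda\left(Q(w_t)-Q^*\right)$, and condition (2) together with the bounded-norm and bounded-variance assumptions gives $\E\norm{F_t}^2 \leq c\left(\sigma^2/b + ds^2 + G_{max}^2\right)$. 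That route gives the $O\left(d\log(1/\delta)/(Tb^2\epsilon^2)\right)$ rate for \emph{every} resilient GAR, so there is also no need to ``instantiate a concrete'' one.

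The lower bound is where your route genuinely diverges and where the main gap lies. An attack-based Le Cam argument defeating \emph{every} GAR cannot prove this theorem: the statement is conditional on $F$ actually being $(\alpha,f)$-Byzantine resilient during the run, and under your simulation attack no GAR can satisfy condition (1) in both worlds (an output that stays positively correlated with the true gradient in each of two indistinguishable worlds would distinguish them). Your construction would thus show the hypothesis fails, not that the conclusion is tight. The construction also over-claims: with $f < n/2$ the two transcripts are not identically distributed (in one world $n-f$ workers center at $\theta$ and $f$ at $\theta'$, in the other the counts are swapped, so a majority test separates them), hence the indistinguishable separation scales with $f/n$ rather than $\Theta(s)$; and a per-step error floor cannot be ``propagated through the SGD recursion,'' since that recursion is an upper-bound device --- a $1/T$-type lower bound needs an information argument over the whole $T$-step transcript. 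The paper's lower bound needs none of this and involves no attack at all: take $Q(w) = \frac{1}{2}\E_{x\sim\mathcal{D}}\norm{w-x}^2$ with $\mathcal{D} = \normal{\bar{x}}{(\sigma^2/d)\,I_d}$, take the hypothetical (but resilient) GAR that outputs a single honest worker's noisy gradient, observe that recovering $w^* = \bar{x}$ from its outputs is exactly estimating $\bar{x}$ from $T$ Gaussian observations of total variance $\sigma^2/b + ds^2$, and apply Cram\'er--Rao to get $\E\left[Q(\hat{w})\right] - Q^* \geq \left(\sigma^2/b + ds^2\right)/(2T)$. The DP noise alone drives the bound, the Byzantine workers play no role in it, and the $\Theta$ is established by exhibiting one resilient GAR and one instance attaining the rate (with $n$ treated as a constant), not by proving that every resilient GAR must suffer it.
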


\noindent
Note that in this work, we employ the Gaussian mechanism \cite{DP} to ensure $(\epsilon, \delta)$-DP. Nonetheless, our key findings on the impracticality of simultaneously ensuring DP and Byzantine resilience remain unchanged when adapting our results to support other noise injection techniques such as the Laplacian mechanism \cite{DP}.

\section{Background}
\subsection{The Classical Distributed SGD Model}\label{DistSGD}
Let us first recall how the distributed SGD protocol works in the classical (simplified) scenario where all $n$ workers are assumed to be honest. Let $Q$ be the cost function to minimize and $\mathcal{D}$ the ground-truth data distribution. Let $t$ be the current training step and $\weight{t} \in \mathbb{R}^d$ be the model parameters at step $t$. Each worker \worker{}{i} locally samples a random training batch \batch{i}{t} from the data distribution $\mathcal{D}$ to compute an unbiased estimate \gradient{i}{t} of the gradient $\nabla Q(\weight{t})$. This means that, \gradient{1}{t}, ..., \gradient{n}{t} are i.i.d. random vectors such that $\E_{\batch{i}{t}}\left[\gradient{i}{t}\right] = \nabla Q(\weight{t})$. Then, all the workers send their gradients to the parameter server. The training is divided into sequential synchronous steps, hence the parameter server considers any non-received gradient to be $\boldsymbol{0}$. After gathering the submitted gradients, the server uses a deterministic GAR, denoted by $F$, to compute the resulting aggregate gradient $G_{agg}^{t} = F(\gradient{1}{t}, ..., \gradient{n}{t})$. In the \textbf{honest} scenario with no Byzantine workers, $F$ is simply the averaging function, i.e., $G_{agg}^{t} =\frac{1}{n}\sum\limits_{i=1}^n\gradient{i}{t}$. Then, the server uses the aggregate gradient to update the model parameters
\begin{equation}
    \label{eqn:SGD} 
    \weight{t+1} = \weight{t} - \gamma_t G_{agg}^t
\end{equation} 
where $\gamma_t$ is the learning rate in step $t$. Finally, the server broadcasts the new parameter vector $\weight{t+1}$ to all workers. 

\begin{remark}
It's important to note that only integrity and authentication are guaranteed on the communication channels between workers and server. Gradients are shared in the clear in the network.
\end{remark}

\subsection{Byzantine SGD}\label{byzRes}
In the presence of Byzantine workers sending \textbf{arbitrary} gradients, averaging the received gradients at the parameter server cannot be used to update the model parameters (Eq.~\ref{eqn:SGD}) anymore. In fact, Blanchard et al. \cite{krum} prove that any aggregation rule based on a linear combination of the received gradients is not robust in a parameter server model containing at least one Byzantine node. In order to enable the underlying model to converge, it is crucial to use a GAR that is robust to Byzantine gradients. Blanchard et al. formulate desirable robustness properties of GARs by introducing the notion of $(\alpha,f)$-Byzantine resilience \cite{krum} that can tolerate up to $f$ Byzantine nodes in the system.\\

\noindent
\paragraph{$\bm{(\alpha,f)}$\textbf{-Byzantine Resilience}}
Let $\alpha \in [0, \frac{\pi}{2}[$ be an angle, and let $n$ be the total number of nodes in the system. Let $0 \leq f \leq n$ be an upper bound on the number of Byzantine nodes. Let $\gradient{i}{t}$ denote the gradient worker $\worker{}{i}$ sends to the server at step $t$. If \worker{}{i} is honest (i.e., non-Byzantine), then $\gradient{i}{t} \sim G_t$ where $\E\left[ G_t\right]$ is equal to the true gradient $\nabla Q(\weight{t})$, i.e., $\E \left[G_t\right] = \nabla Q(\weight{t})$. Otherwise, if \worker{}{i} is Byzantine, then $\gradient{i}{t}$ can be an arbitrary vector.\\

\noindent
$F$ is said to be $(\alpha, f)$-Byzantine resilient if for any input gradients the output $R_t = F(\gradient{1}{t}, \dots , \gradient{n}{t})$ satisfies:
\begin{enumerate}\setlength\itemsep{1em}
    \item $\langle \E \left[R_t \right], \nabla Q(\weight{t})\rangle \geq (1-\sin\alpha)||\nabla Q(\weight{t})||^2 > 0$
    \item For $r \in \{2,3,4\}, \E\left[||R_t||^r\right]$ is upper bounded by a linear combination of the terms $\E\left[||G_t||^{r_1}\right], \dots , \E\left[||G_t||^{r_k}\right]$, where $r_1 + ... + r_k = r$
\end{enumerate}
Condition (1) ensures that the vector angle between the true gradient $\nabla Q(\weight{t})$ and the expected output $R_t$ of the GAR is sufficiently acute. This implies a positive lower bound on the scalar product between $\nabla Q(\weight{t})$ and $\E\left[R_t\right]$. Condition (2) ensures the boundedness of the $2^{nd}$, $3^{rd}$, and $4^{th}$ order moments of the GAR output $R_t$, which is generally required to formally guarantee the convergence of the underlying SGD algorithm~\cite{bottou, krum}. 
Some of the most popular $(\alpha,f)$-Byzantine resilient GARs include \brute~\cite{MDA}, \krum{}~\cite{krum}, \bulyan{}~\cite{brute_bulyan}, \median{}~\cite{median}, \meamed{}~\cite{meamed}, \phocas{}~\cite{phocas}, and \trimmed{}~\cite{median}. Note that this definition of Byzantine resilience also applies when the gradients sent by the workers are noisy (see Eq.~\eqref{noisyGrads}).\\

\noindent
\paragraph{\textbf{VN Ratio Condition}}
A sufficient condition for an aggregation rule $F$ to guarantee $(\alpha,f)$-Byzantine resilience is for the standard deviation of the submitted gradients to be smaller than the norm of the true gradient scaled by a multiplicative constant $k_F(n,f)$ that depends on $F$ \cite{krum, brute_bulyan}. Precisely, the following inequality must hold:
\begin{equation}\label{VN_Ratio_Original}
\frac{\sqrt{\expect{\norm{G_t-\expect{G_t}}^2}}}{\norm{\expect{G_t}}} \leq k_F(n,f)
\end{equation}
Hereafter, we refer to the left hand side of this inequality as the variance-to-norm ratio (\textbf{VN ratio}). Accordingly, we call Eq.~\eqref{VN_Ratio_Original} the \textbf{VN ratio condition}. This condition can be used even in non-trivial learning problems (e.g., when loss function $Q$ is non-convex) and has been extensively studied in the context of Byzantine-resilient SGD \cite{krum, median, brute_bulyan}. While the VN ratio condition is only \textit{sufficient}, due to the lack of necessary conditions in the literature, it remains the only existing technique to theoretically test for ($\alpha, f$)-Byzantine resilience of GARs.

\begin{remark}
Our results in Sections~\ref{Incompatibility} and~\ref{sub:str_cvx} apply for any so-called \textit{statistically-robust} GAR, i.e., a GAR which solely uses present (and/or past) submitted gradients to filter out potential attacks.
The literature also proposes two other families of GARs: \textit{suspicion-based} \cite{suspicion} and \textit{redundancy-based} \cite{redundancy}.
The \textit{suspicion-based} family relies on the parameter server having access to a possibly large training set sampled from the same distribution as the workers. This family of GARs is not private and thus, outside the scope of this paper.
The \textit{redundancy-based} family requires the honest workers to share the exact same training set, and the server to use coding theory in order to detect and filter out Byzantine gradients.
This does not fit our model as it is unclear whether these coding techniques remain applicable when honest workers also add their own privacy noise.
\end{remark}

\subsection{Differential Privacy (DP)}\label{Gaussian}
\noindent
\paragraph{\textbf{Privacy definition}} 
DP~\cite{dwork2014algorithmic} is a gold standard notion for privacy-preserving data analysis. In the context of machine learning, it guarantees that a randomized algorithm computed on a batch of training samples gives a statistically indistinguishable result when computed on an adjacent batch. In this work, we consider two data batches $\xi$ and $\xi'$ to be adjacent, denoted by $\xi \sim \xi'$, if they differ by at most one sample.\\
More precisely, we consider $\epsilon > 0$ and $\delta \in [0,1]$. A randomized algorithm \mechanismNoArg~is $(\epsilon,\delta)$-differentially private if for any adjacent batches $\xi$ and $\xi'$ and any possible set of outputs $O$, the following holds:
\begin{equation}
P[\mechanism{\xi} \in O] \leq e^{\epsilon} \times P \left[ \mechanism{\xi'} \in O \right] + \delta.
\end{equation}
The privacy budget $(\epsilon,\delta)$ measures the amount of privacy the mechanism holds. $\epsilon$ controls the privacy/utility trade-off. This means that smaller values of $\epsilon$ ensure a higher level of privacy, but usually hurt the accuracy of the model. $\delta$ can be seen as a \textit{failure} parameter. It is actually the parameter that controls the approximation one allows when enforcing DP. The smaller $\delta$, the stronger the privacy definition. In fact, $\delta$ has to be cryptographically small for $(\epsilon,\delta)$-DP to offer strong privacy guarantees \cite[Section 2]{dwork2014algorithmic}. 

In the context of distributed learning, DP can be achieved by assuming the existence of a trusted aggregation server that gathers the private information of every worker and releases a sanitized (perturbed) version of the output. However, assuming the existence of a third-party server that every worker trusts with their data is genuinely impractical. In this work, we study a more realistic setting where the server is \textit{honest-but-curious}. As stated in the introduction, the server cannot be trusted to ensure the privacy of the training datasets belonging to the nodes, but computes the aggregation of gradients correctly. In this context, the responsibility to guarantee the privacy of the \textit{personal} databases is delegated to the workers. Specifically, this means that every worker $W_i$ designs its own local randomizer $\mechanismNoArg_i$ to send a perturbed version of its gradient to the untrusted server. We call the distributed system $(\epsilon,\delta)$-differentially private if every local randomizer is $(\epsilon,\delta)$-differentially private.\\

\noindent
\paragraph{The Gaussian Mechanism \& Noisy Gradients}
One of the most common approaches to build a differentially private algorithm is to use Gaussian noise injection~\cite{DP}. This scheme is called the Gaussian mechanism. The application of this mechanism, in the context of distributed SGD with an honest-but-curious server, is to inject Gaussian noise to the gradients computed by the different workers before sending them to the parameter server.
Specifically, let $\batch{i}{t} \triangleq \left\lbrace \datapoint{i,1}{t} \ldotexp \datapoint{i,b}{t} \right\rbrace$ be the batch sampled by honest worker \worker{}{i} in step $t$. Let also $h$ be the function that computes the gradient to be sent to the server, prior to injecting the DP noise,
\begin{equation}\label{eqn:grad_i}
    h: \batch{i}{t} \rightarrow \gradient{i}{t} = \frac{1}{b} \sum\limits_{j = 1}^{b}{\compgrad{\weight{t}}{\datapoint{i,j}{t}}}.~\footnote{$Q(w)$ actually represents the expected cost function evaluated at $w$, where the expectation is taken over $\mathcal{D}$. But, by abuse of notation, we also denote by $Q(w,x)$ the empirical counterpart of this cost function computed on a given sample $x \sim \mathcal{D}$.}
\end{equation}
Then, the amount of noise required to ensure that this sanitizing strategy guarantees a given privacy budget $(\epsilon,\delta)$ depends on the the sensitivity of the mapping $h$ defined as follows, \begin{equation}\label{sensitivity}
\Delta h = \max\limits_{\xi_1 \sim \xi_2}{\normtwo{h(\xi_1) - h(\xi_2)}}. 
\end{equation}
In particular, if we assume that all gradients in our model are bounded in L2-norm by $G_{max}>0$, the sensitivity $\Delta h$ of the gradient function is upper-bounded by $\frac{2G_{max}}{b}$. We can then show that the Gaussian mechanism satisfies DP. Formally, let us consider $(\epsilon, \delta) \in (0,1)^2$ and denote $s= \frac{2 G_{max}\sqrt{2 \log(1.25/\delta)}}{b \epsilon}$. Then, the algorithm 
\begin{equation}\label{DP_Gauss}
    \mechanismNoArg: \batch{i}{t} \rightarrow  \noisygradient{i}{t} = h(\batch{i}{t}) + \boldsymbol{\noise{i}{t}} \text{ with } \boldsymbol{\noise{i}{t}} \sim \mathcal{N}\left(0, I_d \times s^2\right)
\end{equation}
is $(\epsilon,\delta)$-differentially private (see \cite[Appendix A]{dwork2014algorithmic} for a detailed proof of this statement).
Accordingly, for any step $t$, the procedure that consists in every \worker{}{i} sending the noisy gradient \begin{equation}\label{noisyGrads}
    \noisygradient{i}{t} = \gradient{i}{t} + \noise{i}{t}
\end{equation} to the parameter server is $(\epsilon, \delta)$-differentially private.
Finally, given a per-step privacy budget $(\epsilon,\delta)$, we can rely on the composition property of DP to determine the privacy guarantees of the overall learning procedure. For example, the classical composition theorem~\cite{dwork2014algorithmic} states that the DP guarantees of sequential steps add up linearly. Note also that to obtain better DP guarantees in the context of SGD, we can call for more refined tools, such as the moments accountant~\cite{abadi2016deep}. However, in this work, we are mainly interested in the impact of the \textbf{per-step} privacy budget on the robustness of the system to Byzantine nodes.

\begin{remark} Before going further, let us highlight a few points. 
\begin{itemize}
    \item Although we focus on Gaussian noise injection in this work, our results in Sections~\ref{Incompatibility} and~\ref{sub:str_cvx} can easily be adapted to any other DP mechanism based on noise injection (e.g., the Laplacian mechanism~\cite{DP}). Therefore, in the remainder of this work, we no longer mention the Gaussian mechanism when talking about DP noise injection.
    \item Note that the Gaussian mechanism assumes ($\epsilon, \delta$) to be in $(0,1)^2$, which is typically the per-step privacy budget considered in the literature~\cite{abadi2016deep, bassily, papernot} (even in challenging tasks such as deep learning). Therefore, in the remainder of this paper, we assume w.l.o.g that $\epsilon < 1$ and $\delta <1$.
\end{itemize}
\end{remark}

\section[Incompatibility of Byzantine Resilience and DP in SGD (General Setting)]{Incompatibility of $(\alpha,f)$-Byzantine Resilience and DP in SGD (General Setting)}\label{Incompatibility}
First, we study the framework for distributed SGD in the general setting, i.e., without making any assumption on the convexity of the objective function $Q$. In order for the DP noise to be well calibrated, we assume the norm of gradients to be bounded. This assumption can be typically enforced via gradient clipping \cite{clipping} methods.
\begin{assumption}[Bounded norm]
\label{asp:bnd_norm}
There exists a real value $G_{max} >0$ such that for any learning step $t$ and any parameter vector $w_t$, $\norm{\nabla Q(w_t)} \leq G_{max}$.
\end{assumption}

\noindent
In this context, as discussed in Section~\ref{byzRes}, the only \textbf{existing} method to guarantee $(\alpha,f)$-Byzantine resilience for an aggregation rule $F$ is verifying the VN ratio condition. Hence, combining DP with Byzantine resilience requires the VN ratio to still be upper bounded by $k_F(n,f)$ after injecting the noise to the gradients. Precisely, to guarantee that a gradient aggregation step is \textit{both} $(\epsilon,\delta)$-differentially private and $(\alpha,f)$-Byzantine resilient, the following must hold:
\begin{equation}\label{VN_Ratio_Condition}
\frac{\sqrt{\expect{\norm{G_t-\expect{G_t}}^2} + 8d\frac{ G_{max}^2}{\epsilon^2 b^2}\log\left(\frac{1.25}{\delta}\right)}}{\norm{\expect{G_t}}} \leq k_F(n,f), 
\end{equation}
where $d$ is the size of the model, $b$ is the batch size, and $(\epsilon, \delta)$ is the per-step privacy budget. Eq.~\eqref{VN_Ratio_Condition} is the direct adaptation of Eq.~\eqref{VN_Ratio_Original} after accounting for the variance of the Gaussian noise in the numerator of the VN ratio (refer to Section \ref{Gaussian}). This adaptation simply comes from decomposing the variance of \noisygradient{i}{t} into the sum of the variances of the independent random variables \gradient{i}{t} and \noise{i}{t} respectively.\\

\noindent
In the high privacy regime, i.e., when both $\epsilon$ and $\delta$ approach $0$, we see that the VN ratio increases, making it more difficult to satisfy Eq.~\ref{VN_Ratio_Condition} and thus, ensure $(\alpha, f)$-Byzantine resilience. In fact, we show that the VN ratio condition can only hold {\it if} either the batch size $b$ is in $\Omega\left( \sqrt{d} \right)$ or the proportion of Byzantine nodes in the system $\frac{f}{n}$ is in $O\left( \frac{1}{\sqrt{d}}\right)$, depending on the GAR used as shown in Table~\ref{table:SummaryGARs}. Below, we demonstrate this statement by focusing on one popular GAR called MDA \cite{MDA} for space limitation, but similar results also hold for the other GARs from the literature, as illustrated in Table~\ref{table:SummaryGARs} (detailed proofs are deferred to the Appendix).

\begin{proposition} \label{th:MDAgeneral}
Let $(\epsilon,\delta) \in (0,1)^2$ be the constant privacy budget used to inject DP noise to the gradients and $F=\brute{}$. Then, the VN ratio condition can only hold if $\frac{f}{n} \in O\left(\frac{b}{\sqrt{d} + b}\right)$.
\end{proposition}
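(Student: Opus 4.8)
The plan is to convert the VN ratio condition, Eq.~\eqref{VN_Ratio_Condition}, into a necessary upper bound on $\frac{f}{n}$ by first extracting a clean, dimension-dependent lower bound on the VN ratio, and then confronting it with the closed form of $k_F(n,f)$ for $F=\brute{}$.

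First I would lower bound the left-hand side of Eq.~\eqref{VN_Ratio_Condition}. Inside the numerator's square root, the gradient-variance term $\expect{\norm{G_t-\expect{G_t}}^2}$ is nonnegative, so I may discard it and retain only the injected-noise term $8d\frac{G_{max}^2}{\epsilon^2 b^2}\log\!\left(\frac{1.25}{\delta}\right)$. For the denominator, Assumption~\ref{asp:bnd_norm} together with $\expect{G_t}=\nabla Q(w_t)$ gives $\norm{\expect{G_t}}\le G_{max}$. Combining these and cancelling $G_{max}$ yields
\begin{equation*}
\text{VN ratio} \;\ge\; \frac{1}{\epsilon b}\sqrt{8\,d\,\log\!\left(\tfrac{1.25}{\delta}\right)} \;=\; \frac{c\sqrt{d}}{b},
\end{equation*}
where $c = \frac{1}{\epsilon}\sqrt{8\log(1.25/\delta)}$ is a strictly positive constant (since $\epsilon,\delta\in(0,1)$ force $\log(1.25/\delta)>0$). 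The point of this step is that the entire $\epsilon,\delta,G_{max}$ dependence collapses into a constant, leaving a lower bound that grows like $\sqrt{d}/b$ regardless of the true gradient.

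Next I would substitute the known closed form of $k_F(n,f)$ for $F=\brute{}$ (as tabulated in Table~\ref{table:SummaryGARs}), which is monotone decreasing in $f$ and scales, up to a multiplicative constant, like $\frac{n-f}{f}$. Feeding the lower bound above into the VN ratio condition $\text{VN ratio}\le k_F(n,f)$ produces the necessary inequality $\frac{c\sqrt{d}}{b}\le k_F(n,f)$. Writing $k_F(n,f)=\Theta\!\left(\frac{n-f}{f}\right)=\Theta\!\left(\frac{n}{f}-1\right)$ and rearranging gives $\frac{n}{f}\ge 1+\Theta\!\left(\frac{\sqrt{d}}{b}\right)=\Theta\!\left(\frac{b+\sqrt{d}}{b}\right)$; inverting this finally yields $\frac{f}{n}\le\Theta\!\left(\frac{b}{b+\sqrt{d}}\right)$, that is, $\frac{f}{n}\in O\!\left(\frac{b}{\sqrt{d}+b}\right)$.

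The first step is essentially assumption-bookkeeping, and is the only place Assumption~\ref{asp:bnd_norm} is genuinely used; the rearrangement in the second step is routine once the form of $k_F(n,f)$ is fixed. Consequently, I expect the main obstacle to be pinning down the exact closed form of $k_F(n,f)$ for \brute{}, and verifying both that it is decreasing in $f$ (so that the inequality truly caps $\frac{f}{n}$) and that it scales linearly in $1/f$. This last point is what makes the conclusion come out as $O\!\left(\frac{b}{\sqrt{d}+b}\right)$ rather than, say, $O\!\left(\frac{b^2}{d}\right)$, which is what a $1/\sqrt{f}$ scaling would have produced.
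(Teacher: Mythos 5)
Your proposal is correct and follows essentially the same route as the paper's proof: drop the nonnegative gradient-variance term, use $\norm{\expect{G_t}} \le G_{max}$ to cancel $G_{max}$ and obtain a $\sqrt{d}/b$-type bound, then plug in \brute{}'s $k_F(n,f) = (n-f)/(\sqrt{8}f)$ and rearrange in terms of $\tau = f/n$. The only cosmetic difference is that the paper phrases this as a contraposition (a sufficient condition for the VN ratio condition to \emph{fail}) while you argue directly from a lower bound on the VN ratio; the algebra and the use of Assumption~\ref{asp:bnd_norm} are identical.
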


\begin{sketchofproof}
We reason by contraposition to show that $\frac{f}{n} \in O\left( \frac{b}{\sqrt{d} + b}\right)$ is necessary for the VN ratio condition to hold. This means that we will actually find a sufficient condition for it \textit{not} to hold. 
First, notice that regardless of the GAR used, the VN ratio condition does not hold if  
$$\frac{\sqrt{\expect{\norm{G_t-\expect{G_t}}^2} + 8d\frac{G_{max}^2}{\epsilon^2 b^2}\log\left(\frac{1.25}{\delta}\right)}}{\norm{\expect{G_t}}} > k_F(n,f).$$
Since $\expect{\norm{G_t-\expect{G_t}}^2} \geq 0$, the above inequality holds if
\begin{equation*}
    \norm{\expect{G_t}} < G_{max} \times \sqrt{ 8d\frac{\log\left(\frac{1.25}{\delta}\right)}{\epsilon^2b^2}} \times \frac{1}{ k_F(n,f)}.
\end{equation*}

Let us denote $C= \epsilon / \sqrt{\log\left(\frac{1.25}{\delta}\right)}$. Since $G_{max}$ is the maximal L2 norm gradients can have and $\expect{G_t} = \nabla Q(w_t)$, the VN ratio condition does not hold whenever $$\sqrt{\frac{8d}{C^2 b^2}} \times \frac{1}{ k_F(n,f)} > 1 \Leftrightarrow \frac{1}{ k_F(n,f)} > \frac{b \times C}{\sqrt{8d}}.$$
Now, notice that since $F=\brute{}$, we have $k_F(n,f)=(n-f)/(\sqrt{8}f)$. If we denote by $\tau = \frac{f}{n}$ the proportion of Byzantine workers in the system, we have $k_F(n,f)=(1-\tau)/(\sqrt{8}\tau)$. Then the above inequality can be rewritten as $8 \sqrt{d} \tau > (1-\tau) C \times b$, which is equivalent to saying that $\tau >  (C \times b) / (8\sqrt{d} +  C \times b)$.\\

\noindent
Finally, since the privacy budget $(\epsilon, \delta) \in (0,1)^2$, $C$ is a negligible constant w.r.t $b$ and $d$. Then, when computing a differentially private gradient step, the VN ratio cannot hold unless $\frac{f}{n} \in O\left(\frac{b}{\sqrt{d} + b}\right)$.\qed
\end{sketchofproof}

\noindent
Proposition~\ref{th:MDAgeneral} implies that ensuring both DP and $(\alpha,f)$-Byzantine resilience with \brute{} requires the batch size $b$ to grow linearly with the square root of the number of parameters ($\sqrt{d}$), or for the fraction of Byzantine nodes $\frac{f}{n}$ to converge towards $0$ at the rate of $\sqrt{d}$. For example, if we consider the ResNet-50 model \cite[Table 8]{DBLP:journals/corr/ZagoruykoK16} where $d = 25.6 \times 10^6$ parameters, then the we need a batch size $b > 5000$, which is clearly impractical. In this context, we can only ensure protection against Byzantine nodes in the trivial setting where almost all workers are honest. Although we only state this incompatibility result for \brute{}, similar results can be demonstrated for any GAR from the literature using the same proof scheme. Table~\ref{table:SummaryGARs} summarizes the necessary conditions for the VN ratio condition (Eq.~\eqref{VN_Ratio_Condition}) to hold depending on the GAR at hand.

\begin{table}[ht]
\caption{Necessary condition for the VN ratio condition to hold in the context of $(\epsilon, \delta)$-DP distributed SGD, for 7 different GARs form the literature.}
\centering
\begin{tabular}{|c|c|c|c|}
\hline
\begin{tabular}{@{}c@{}}\textbf{\krum{}}, \textbf{\median{}}, \\ \textbf{\bulyan{}}, \textbf{\meamed{}}\end{tabular} & \textbf{\brute{}} & \textbf{\phocas{}}, \textbf{\trimmed{}}\\
\hline
 $b \in \Omega(\sqrt{nd})$ & $\frac{f}{n} \in O\left(\frac{b}{\sqrt{d}+b}\right)$ & $\frac{f}{n} \in O\left(\frac{b^2}{d+b^2}\right)$\\
\hline
\end{tabular}
\label{table:SummaryGARs}
\end{table}

\noindent
Next, to get a more fine-grained analysis on the impact of noise injection on the $(\alpha, f)$-Byzantine resilience property of the distributed SGD scheme, we study the particular case where the loss function is strongly convex.

\section{Learning with Byzantine Resilience and DP: The Case of Strong-Convexity}\label{sub:str_cvx}

In this section, we take a step forward and show that regardless of the GAR used, the training error rate deteriorates in the presence of Byzantine workers when injecting DP noise to the gradients, as specified in Section~\ref{Gaussian}. For simplicity of presentation, we assume the cost function $Q(w)$ to be strongly convex, the gradient $\nabla Q(w)$ to be globally Lipschitz continuous, and the stochastic gradients $\nabla Q(w, \, x)\, \vline_{x \sim \mathcal{D}}$ to have bounded variance for all $w$. It is important to note that these assumptions hold true in many distributed learning problems~\cite{bottou2018optimization}. We state these assumptions formally as follows. 

\begin{assumption}[Strong convexity]
\label{asp:str_cvx}
There exists a finite real value $\lambda > 0$ such that for all $w, \, w' \in \R^d$,
\begin{align*}
    \iprod{w - w'}{\nabla Q(w) - \nabla Q(w')} \geq \lambda \norm{w - w'}^2.
\end{align*}
\end{assumption}

\begin{assumption}[Lipschitznes]
\label{asp:lip}
There exists a finite positive real value $\mu$ such that
\begin{align*}
    \norm{\nabla Q(w) - \nabla Q(w')} \leq \mu \norm{w - w'}, \quad \forall w, \, w' \in \R^d.
\end{align*}
\end{assumption}

\begin{assumption}[Bounded variance]
\label{asp:bnd_var}
There exists a positive real value $\sigma < \infty$ such that for all $w$,
\[\E_{x \sim \mathcal{D}} \left[\norm{ \nabla Q\left(w, \, x \right) - \nabla Q\left(w\right)}^2\right] \leq \sigma^2.\]
\end{assumption}

\noindent
Under the above assumptions, we show in Theorem~\ref{thm:priv_res_tf} that even if we were to successfully design a GAR that is $(\alpha, \, f)$-Byzantine resilient whilst guaranteeing $(\epsilon, \, \delta)$-DP using noise injection, the training error rate of the distributed SGD algorithm is $O\left(d / T \right)$ where $T$ denotes the number of steps. On the other hand, we can show from Eq.~\eqref{eqn:main_no-dp-rate} below, that the same distributed SGD algorithm with $(\alpha, \, f)$-Byzantine resilience can obtain a training error rate of $O\left(1 / T \right)$ in the absence of DP. In the remainder of the section, let $\xi_t$ denote the random data points sampled by the workers in step $t$, and let $\E_{\xi_t} [\cdot]$ denote the conditional expectation with respect to $\xi_t$, given the estimate $w_{t}$. For $t \geq 1$, let $\E_t[\cdot] = \E_{\xi_1} \ldots \E_{\xi_t} [\cdot]$. Then, the following theorem holds.

\begin{theorem}
\label{thm:priv_res_tf}
Suppose that Assumptions~\ref{asp:bnd_norm},~\ref{asp:str_cvx},~\ref{asp:lip}, and~\ref{asp:bnd_var} hold true. Consider a GAR named $F: \R^{d \times n} \to \R^d$. Suppose that in each step $t$, the server updates its learning parameter $w_{t}$ using the noisy gradients $\left\{ o^{(1)}_t, \ldots, \, o^{(n)}_t \right\}$ specified in Section~\ref{Gaussian} and GAR $F$ as follows:
\begin{align}
    w_{t+1} = w_t - \gamma_t \, F\left( o^{(1)}_t, \ldots, \, o^{(n)}_t \right).  \label{eqn:update_thm}
\end{align}
Let $Q^* = \min_{w \in \R^d} Q(w)$. If $F$ is $(\alpha, \, f)$-Byzantine resilient for steps $1$ to $T$, and $\gamma_t = \frac{1}{\lambda (1 - \sin \alpha) \, t}$, then
\begin{align*}
    \E_{T} \left[ Q(w_{T+1}) - Q^* \right] \in \Theta\left( \frac{d \log(1/\delta)}{ T b^2 \epsilon^2} \right).
\end{align*}
\end{theorem}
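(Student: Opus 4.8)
The plan is to run the standard convergence argument for strongly-convex SGD, but driven by the two $(\alpha,f)$-Byzantine resilience conditions of Section~\ref{byzRes} rather than by unbiasedness of the aggregate. Writing $R_t = F(o^{(1)}_t,\dots,o^{(n)}_t)$, I would first invoke Lipschitz smoothness (Assumption~\ref{asp:lip}) to obtain the one-step descent inequality $Q(w_{t+1}) \le Q(w_t) - \gamma_t \iprod{\nabla Q(w_t)}{R_t} + \tfrac{\mu}{2}\gamma_t^2\norm{R_t}^2$, and take the conditional expectation $\E_{\xi_t}[\cdot]$. Condition~(1) of Byzantine resilience then lower-bounds the linear term by $\gamma_t(1-\sin\alpha)\norm{\nabla Q(w_t)}^2$, while strong convexity (Assumption~\ref{asp:str_cvx}) supplies the gradient-domination inequality $\norm{\nabla Q(w_t)}^2 \ge 2\lambda\,(Q(w_t)-Q^*)$. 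Combining these and abbreviating $a_t = \E_{t-1}[Q(w_t) - Q^*]$ yields a recursion $a_{t+1} \le \bigl(1 - 2\gamma_t\lambda(1-\sin\alpha)\bigr)\,a_t + \tfrac{\mu}{2}\gamma_t^2\,\E[\norm{R_t}^2]$, and the prescribed step size $\gamma_t = \tfrac{1}{\lambda (1-\sin\alpha)\, t}$ collapses the contraction factor to $(1-2/t)$.

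The second ingredient is a bound on $\E[\norm{R_t}^2]$, which is exactly where the dimension enters. Each honest noisy gradient $o^{(i)}_t = \gradient{i}{t} + \noise{i}{t}$ has second moment $\norm{\nabla Q(w_t)}^2 + \sigma^2/b + d\,s^2$, the three terms coming respectively from Assumption~\ref{asp:bnd_norm}, the batch-averaged variance of Assumption~\ref{asp:bnd_var}, and the injected Gaussian noise with $s = \tfrac{2 G_{max}\sqrt{2\log(1.25/\delta)}}{b\epsilon}$. Condition~(2) of Byzantine resilience bounds $\E[\norm{R_t}^2]$ by a linear combination of moments of this single-worker distribution, so for large $d$ it is dominated by $d\,s^2 = \Theta\!\left(\tfrac{d\log(1/\delta)}{b^2\epsilon^2}\right)$. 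Substituting $M := \E[\norm{R_t}^2] = O(d\,s^2)$ gives $a_{t+1} \le (1-2/t)\,a_t + B/t^2$ with $B = \Theta(d\,s^2)$, and a routine induction shows $a_t \le B/t$ for every $t$, hence $\E_T[Q(w_{T+1}) - Q^*] = O\!\left(\tfrac{d\log(1/\delta)}{T b^2\epsilon^2}\right)$, the claimed upper bound.

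To upgrade $O$ to $\Theta$, I would exhibit a single instance matching the rate from below. Take $Q(w) = \tfrac{\lambda}{2}\norm{w}^2$ (so $Q^* = 0$), set $f=0$, and let $F$ simply output one fixed worker's noisy gradient; with no Byzantine workers this $F$ is $(\alpha,0)$-Byzantine resilient, yet its output carries the full noise variance $d\,s^2$, undiminished by any averaging. On this quadratic the update becomes a linear recursion whose variance can be computed in closed form, and with $\gamma_t = \Theta(1/t)$ the expected suboptimality is $\tfrac{\lambda}{2}\E[\norm{w_{T+1}}^2] = \Omega(d\,s^2 / T)$, matching the upper bound and establishing the $\Theta$.

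I expect the lower bound to be the delicate step: the upper bound is a fairly mechanical adaptation of the textbook strongly-convex SGD recursion, whereas obtaining a tight $\Omega(d\,s^2/T)$ --- rather than merely observing that the analysis produces an $O(d\,s^2/T)$ term --- requires a concrete construction in which the mandatory DP noise cannot be averaged away, together with an \emph{exact} (not just upper-bounded) variance recursion. A secondary point of care is the simultaneous use of both resilience conditions and the tower property over the filtration $\E_{\xi_1}\cdots\E_{\xi_t}[\cdot]$, to ensure the per-step bound composes correctly into the stated rate.
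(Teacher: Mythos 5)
Your proposal is correct, and its upper-bound half is essentially the paper's own argument: the same smoothness descent inequality, the same use of resilience condition (1) together with the gradient-domination inequality $\norm{\nabla Q(w_t)}^2 \geq 2\lambda (Q(w_t)-Q^*)$, the same moment bound $\E[\norm{R_t}^2] \leq c\left(\sigma^2/b + d s^2 + G_{max}^2\right)$ extracted from condition (2), and the same $(1-2/t)$ recursion closed by induction. Where you genuinely diverge is the lower bound. The paper takes $Q(w) = \frac{1}{2}\E_{x \sim \mathcal{D}}\norm{w-x}^2$ with $\mathcal{D} = \mathcal{N}\left(\bar{x}, \frac{\sigma^2}{d} I_d\right)$, lets the GAR output one honest worker's noisy gradient, and observes that $T$ steps of the algorithm amount to estimating the Gaussian location $\bar{x}$ from $T$ noisy observations; the Cram\'er--Rao bound then gives $\E\left[\norm{\hat{w}-\bar{x}}^2\right] \geq \left(\sigma^2/b + d s^2\right)/T$ for \emph{any} estimator $\hat{w}$, hence in particular for $w_{T+1}$. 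Your route instead computes the exact second-moment recursion of the prescribed iteration on the noise-dominated quadratic $Q(w) = \frac{\lambda}{2}\norm{w}^2$. Both are valid proofs of the theorem as stated, but they buy different things. The paper's argument is information-theoretic: on its instance no GAR, step-size schedule, or post-processing of the noisy gradients can beat $\Omega(d s^2/T)$, which is a stronger notion of tightness (its cost is a rigor wrinkle the paper glosses over: Cram\'er--Rao as invoked applies to unbiased estimators, and fully general estimators need a minimax or van Trees version). Your argument is elementary and self-contained but is tied to the specific update and to $\gamma_t = \Theta(1/t)$; note also that the decisive step is not one-line, since the crude bound $\E\left[\norm{w_{T+1}}^2\right] \geq \gamma_T^2 d s^2$ only yields $\Omega(d s^2/T^2)$ — you need the full product asymptotics $\prod_{k=t+1}^{T}(1-\beta/k)^2 \approx (t/T)^{2\beta}$ with $\beta = 1/(1-\sin\alpha) \geq 1$, and the condition $2\beta > 1$, to sum the injected noise up to $\Theta(d s^2/T)$. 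A last, more conceptual difference: in your instance the loss is data-independent, so the gradient's sensitivity is zero and the injected noise, while mandated by the theorem's setup, is not actually forced by privacy; the paper's instance has gradients that genuinely reveal the data, so it witnesses the DP-versus-resilience tension rather than just the cost of additive noise. (Both instances share the cosmetic defect that a strongly convex $Q$ on $\R^d$ cannot have globally bounded gradients as in Assumption~\ref{asp:bnd_norm}; this is inherited from the setting, not specific to your proof.)
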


\begin{sketchofproof}
We let $F_t \triangleq F\left( o^{(1)}_t, \ldots, \, o^{(n)}_t \right)$. The proof relies on the following key observation that holds true when $F_t$ is assumed $(\alpha, \, f)$-Byzantine resilient for steps $1$ to $T$:  
\begin{align}
    \langle \E_{\xi_t} \left[F_t \right], \, \nabla Q(w_t) \rangle & \geq (1-\sin\alpha) \norm{\nabla Q(w_t)}^2 > 0. \label{eqn:main_pi_byzres}
\end{align}
Also, there exists a real-value $c > 0$ such that for each $ t \in \{ 1, \ldots, \, T \}$,
\begin{align}
    \E_{\xi_t} \left[ \norm{F_t}^2 \right] \leq c \left(\frac{\sigma^2}{b} + d s^2 + G_{\max}^2 \right). \label{eqn:main_byrres_mom_2}
\end{align}
The details for obtaining the results in~\eqref{eqn:main_pi_byzres} and~\eqref{eqn:main_byrres_mom_2} can be found in Appendix~\ref{app:priv_res_tf}.
From~\eqref{eqn:update_thm} and Assumption~\ref{asp:lip}, we obtain for all $t$ that
\begin{align*}
    Q(w_{t+1}) \leq Q(w_t)  - \gamma_t \iprod{\nabla Q(w_t)}{ F_t } + \frac{1}{2} \mu \gamma^2_t \norm{F_t}^2 . 
\end{align*}
As $\E_{\xi_t}\left[ Q(w_t)\right] = Q(w_t)$, the above implies that, for all $t$,
\begin{multline*}
    \E_{\xi_t} \left[Q(w_{t+1}) \right] \leq Q(w_t)  - \gamma_t \iprod{\nabla Q(w_t)}{ \E_{\xi_t} \left[ F_t \right]}\\
    + \frac{1}{2} \mu \gamma^2_t \E_{\xi_t} \left[ \norm{F_t}^2 \right]. 
\end{multline*}
Substituting from~\eqref{eqn:main_pi_byzres}, and then using the fact that $\norm{\nabla Q(w_t)}^2 \geq 2 \lambda \, \left(Q(w_t) - Q^* \right)$ under Assumption~\ref{asp:str_cvx}, we obtain that
\begin{multline*}
\E_{\xi_t} \left[Q(w_{t+1})\right] - Q^* \leq \left(1 - 2 \lambda (1 - \sin \alpha) \gamma_t \right) \left( Q(w_{t}) - Q^* \right)\\ 
+ \frac{1}{2} \mu \gamma^2_t \E_{\xi_t} \left[\norm{F_t}^2 \right] , \quad \text{ for } t = 1, \ldots, \, T.
\end{multline*}
Substituting from~\eqref{eqn:main_byrres_mom_2}, and then taking expectation $\E_{\xi_1} \dots \E_{\xi_{t-1}}$ on both sides above (for $t > 1$), we obtain that
\begin{multline*}
    \E_t \left[Q(w_{t+1}) \right] - Q^* \leq \left(1 - 2 \lambda (1 - \sin \alpha) \gamma_t \right) \left( \E_{t-1} \left[Q(w_{t}) \right] - Q^* \right)\\
    + \frac{1}{2} \mu c \left(\frac{\sigma^2}{b} + d s^2 + G_{\max}^2 \right) \gamma^2_t, \quad \text{ for } t = 2, \ldots, \, T.
\end{multline*}
Thus, from recursive substitutions we obtain that
\begin{align}
    \E_T \left[ Q(w_{T+1}) \right] - Q^*  \leq \frac{1}{T + 1} \left(\frac{\mu \, c}{2 \lambda^2 (1 - \sin \alpha)^2} \right) \, \left(\frac{\sigma^2}{b} + d s^2 + G_{\max}^2 \right) \label{eqn:main_no-dp-rate}
\end{align}
Substituting from Section~\ref{Gaussian}, $s= \frac{2 G_{max}\sqrt{2 \log(1.25/\delta)}}{b \epsilon}$, implies
\[\E_T \left[ Q(w_{T+1}) \right] - Q^* \in O\left( \frac{d \log(1/\delta)}{ T b^2 \epsilon^2} \right).\]
To prove the {\bf lower bound} we consider a specific cost function $Q(w) := (1/2)\E_{x \sim \mathcal{D}}\norm{w - x}^2$ where $\mathcal{D} = \mathcal{N}\left(\bar{x}, \, \frac{\sigma^2}{d} I_d\right)$ where $\bar{x} \in \R^d$. Note that in this case, the minimum point $w^*$ of $Q(w)$ is simply $\bar{x}$. Thus, $Q^* = (1/2)\E_{x \sim \mathcal{D}}\left[\norm{\bar{x} - x}^2\right]$, and for any $w \in \R^d$, 
\[Q(w)  = \frac{1}{2}\E_{x \sim \mathcal{D}}\left[\norm{w - \bar{x} + \bar{x} - x}^2\right] = \frac{1}{2} \norm{w - \bar{x}}^2 + Q^*.\]
Now, we consider a hypothetical GAR $F$ that outputs the gradient of an honest worker in each step $t$.\footnote{In practice, such a GAR may never exist as the identity of honest workers is a priori unknown.} As the gradients of honest workers are unbiased estimators of the true gradient, this particular GAR is indeed $(\alpha, \, f)$-Byzantine resilient. The problem of computing $w^*$ using outputs of $F$, operating on noisy workers' gradients (defined in Section~\ref{Gaussian}), in $T$ steps is equivalent to estimating $\bar{x}$ using $T$ noisy observations $\{\bar{x} + z_t; ~ 1 \leq t \leq T\}$ where $z_t \sim \mathcal{N}\left(0,  \left(\frac{\sigma^2}{db} + d s^2 \right) I_d\right), \forall t$.
Hence, by the Cram\'er-Rao bound~\cite{rao1992information}, for any (stochastic) estimate $\hat{w}$ of $\bar{x}$ after $T$ steps, $\expect{\norm{\hat{w} - \bar{x}}^2} \geq \left(\frac{\sigma^2}{b} + d s^2 \right)\frac{1}{T}$. Thus,
\[\expect{Q(\hat{w})} - Q^* \geq \left(\frac{\sigma^2}{b} + d s^2 \right)\frac{1}{2T}.\]
Substituting $s$ above proves the lower bound.

$\hfill \qed$
\end{sketchofproof}

\noindent
According to Theorem~\ref{thm:priv_res_tf}, the training error rate may degrade linearly with the size $d$ of the learning model, for a fixed privacy budget $(\epsilon, \, \delta)$ and a fixed batch size $b$. Now, if $b \in \Omega{(\sqrt{d})}$, then Theorem~\ref{thm:priv_res_tf} along with the observations in Table~\ref{table:SummaryGARs} implies that the distributed SGD algorithm with DP gradients and $(\alpha, f)$-Byzantine-resilient GARs (e.g., \krum{} and \brute{}) may attain a training error rate of $O(1/T)$, which is indeed optimal~\cite[Section 4]{bottou2018optimization}. However, in many cases, $b \in \Omega{(\sqrt{d})}$ is practically \textbf{not} viable, as argued in Section~\ref{Incompatibility} through the example of ResNet-50. In the next section, we see that even for moderate model sizes, combining DP and Byzantine resilience requires impractically large batches.
\section{Experimental Support}
Our theory predicts that the training batch size $b$ must grow as fast as $\sqrt{d}$ to satisfy the VN ratio condition of Byzantine-resilient GARs\footnote{Our analysis shows that such a condition exists for all known statistically-robust Byzantine-resilient GARs, when the ratio of Byzantine workers $\frac{f}{n}$ is fixed.}.
Even for small neural networks ($d \approx 10^5$), our theoretical results already suggest that unrealistically large batch sizes are required to satisfy this VN condition.

In this section, we experiment training with a smaller, convex model, measuring how the loss and accuracy respectively evolve with the privacy parameter $\epsilon$ and the training batch size $b$.
We notice that while DP and Byzantine resilience can most of the time be guaranteed alone, combining the two proves to be difficult even with the fairly small ($d = 69$) model at hand.

\begin{figure}
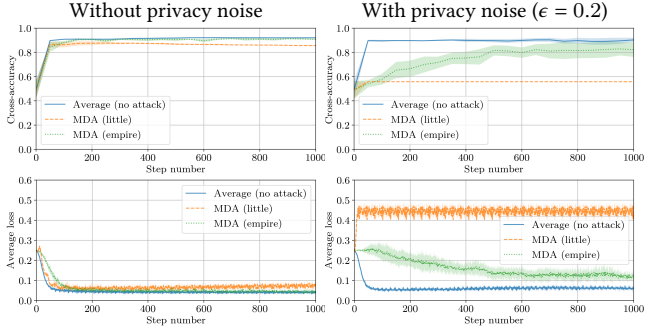

    \centering
    \makebox[0.5\linewidth][c]{\small{}Without privacy noise}%
    \makebox[0.5\linewidth][c]{\small{}With privacy noise ($\epsilon = 0.2$)}\\[0.1mm]%
    \includeplot{0.5\linewidth}{e_inf-b_50}%
    \includeplot{0.5\linewidth}{e_0.2-b_50}\\%
    \includeplot{0.5\linewidth}{e_inf-b_50-loss}%
    \includeplot{0.5\linewidth}{e_0.2-b_50-loss}%
    \caption{In this set of experiments, $b = 50$. Without any DP, the minimum loss is reached in less than 100 steps, no matter which or whether an attack occurred. When DP noise is used, the \textit{unattacked} case remains essentially unaffected, the minimum loss being reached in 50 steps in both cases. We finally observe that, when DP noise is employed under attack, the protection provided by \brute{} (despite \brute{} offering the highest known upper bound of the VN ratio) is noticeably lowered: this is an instance of the antagonism between privacy noise and $\left(\alpha, f\right)$-Byzantine resilience.}
    \label{fig:experiments-main}
\end{figure}

\subsection{Experimental Setup}
We train a logistic regression model on the academic \textit{phishing} dataset\footnote{\url{https://www.csie.ntu.edu.tw/~cjlin/libsvmtools/datasets/}}.
Each datapoint in the \textit{phishing} dataset contains 68 features, and so our model has $d = 69$ parameters (there is an additional parameter for the bias).
We use the \textit{mean square error} as training loss.
The model is trained with SGD over $1000$ iterations, with a fixed learning rate of $\eta = 2$ and a momentum of $0.99$.
Stochastic gradients are clipped to a maximum $\ell_2$-norm of $G_{max} = 10^{-2}$.
Each worker adds a privacy noise only \textit{after} clipping the original gradient.
We set the privacy parameter $\delta = 10^{-6}$.

The phishing dataset contains 11\,055 datapoints.
We split them into \textit{training} and \textit{testing} sets, containing 8\,400 and 2\,655 datapoints respectively.
We measure (1) the cross-accuracy achieved by the logistic regression over the entire testing set every 50 steps, and at each step (2) the average loss achieved by the model over the training datapoints sampled by the honest workers.
The values of the privacy parameter $\epsilon$ and the training batch size $b$ are varied across experiments; refer to the full version of the paper\footnote{\url{https://arxiv.org/abs/2102.08166v1}}.

\begin{figure}
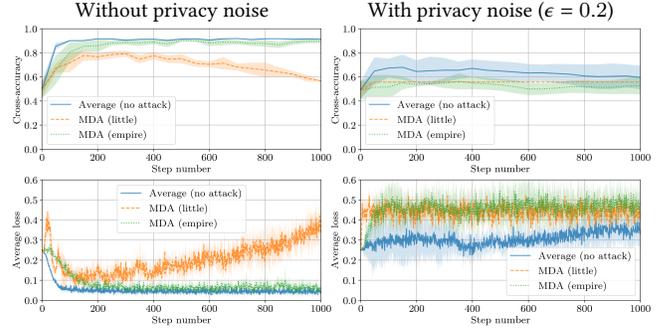

    \centering
    \makebox[0.5\linewidth][c]{\small{}Without privacy noise}%
    \makebox[0.5\linewidth][c]{\small{}With privacy noise ($\epsilon = 0.2$)}\\[0.1mm]%
    \includeplot{0.5\linewidth}{e_inf-b_10}%
    \includeplot{0.5\linewidth}{e_0.2-b_10}\\%
    \includeplot{0.5\linewidth}{e_inf-b_10-loss}%
    \includeplot{0.5\linewidth}{e_0.2-b_10-loss}%
    \caption{Compared to Fig.\ \ref{fig:experiments-main}, we set a much smaller training batch size $b = 10$.
    Decreasing $b$ increases the variance of honest gradients.
    While the \textit{unattacked} setting without DP noise remains mostly unaffected here, adding noise significantly hampers the training even without attack.
    This ``extreme'' configuration (with Fig.\ \ref{fig:experiments-extreme-high} on the other side of the spectrum) reminds that in practice (i.e., in a finite number of steps) mere \textit{convergence}, even without attack or DP, already requires a sufficiently low variance. Both Byzantine resilience and DP step this intrinsic requirement up.}
    \label{fig:experiments-extreme-low}
\end{figure}

\begin{figure}
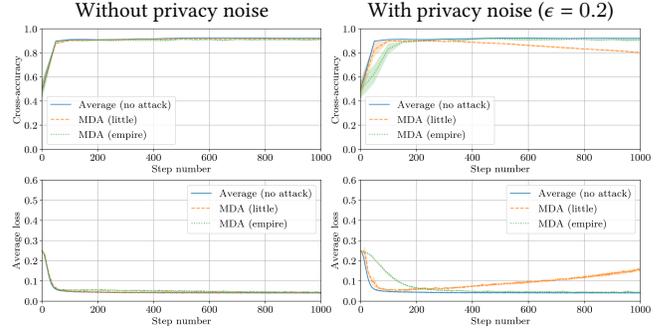

    \centering
    \makebox[0.5\linewidth][c]{\small{}Without privacy noise}%
    \makebox[0.5\linewidth][c]{\small{}With privacy noise ($\epsilon = 0.2$)}\\[0.1mm]%
    \includeplot{0.5\linewidth}{e_inf-b_500}%
    \includeplot{0.5\linewidth}{e_0.2-b_500}\\%
    \includeplot{0.5\linewidth}{e_inf-b_500-loss}%
    \includeplot{0.5\linewidth}{e_0.2-b_500-loss}%
    \caption{Compared to Fig.\ \ref{fig:experiments-main}, we set a much larger training batch size $b = 500$.
    Increasing $b$ decreases the variance of the honest gradients.
    Training remains unaffected: the minimum loss and maximum accuracy achieved by the \textit{unattacked}, non differentially-private runs are also achieved under attack and/or with the privacy noise.
    This ``extreme'' configuration (with Fig.\ \ref{fig:experiments-extreme-low} on the other side of the spectrum) highlights that this \textit{incompatibility} between DP and Byzantine resilience is not about an \textit{impossibility} in the strict sense of the term, but more about an \textit{antagonism} that may prove difficult to resolve in practice ($b = 500$ is unreasonably high considering the minimal loss can be attained with only $b = 10$, i.e., $50\times$ less training samples).}
    \label{fig:experiments-extreme-high}
\end{figure}

\paragraphspace{}
\noindent
\paragraph{State-of-the-art attacks}
We consider two recent state-of-the-art attacks \cite{distributed-momentum}.
Both attacks follow the same core principle.
Let $\nu \in \setr{}_{\ge 0}$ be a non-negative factor, and $\attack{t} \in \setr{}^d$ an \textit{attack vector} which depends on the attack used (see below for possible values of $\attack{t}$).
At each step $t$, each Byzantine worker submits the same Byzantine gradient: $\overline{\gradient{}{t}} + \nu \, \attack{t}$,
where $\overline{\gradient{}{t}}$ is an approximation of the real gradient $\nabla{}\realloss{\params{}{t}}$ at step $t$, and $\nu$ is a constant (see below).

\begin{itemize}[nolistsep,noitemsep,leftmargin=*]
    \item{\textit{A Little is Enough} \citep{little}:
    in this attack, each Byzantine worker submits $\overline{\gradient{}{t}} + \nu \, \attack{t}$, with $\attack{t} \triangleq -\sigma_t$ the opposite of the coordinate-wise standard deviation of the honest gradient distribution.
    Our experiments use $\nu = 1.5$, as proposed by the original paper.}
    \item{\textit{Fall of Empires} \citep{empire}:
    each Byzantine worker submits $\left( 1 - \nu \right) \overline{\gradient{}{t}}$, i.e., $a_t \triangleq -\overline{\gradient{}{t}}$.
    The original paper tested $\nu' \in \{$-10, -1, 0, 0.1, 0.2, 0.5, 1, 10, 100$\}$,
    our experiments use\footnote{This factor made this attack consistently successful in the original paper.} $\nu = 1.1$, corresponding in the notation of the original paper to $\nu' \triangleq -\left( 1 - \nu \right) = 0.1$.}
\end{itemize}

\paragraphspace{}
\noindent
\paragraph{Choice of GAR}
In all our experiments, the parameter server uses \brute{} to aggregate the received gradients.
\brute{} has one of the largest VN ratio upper bounds ($k_F(n,f)$, Eq.~\eqref{VN_Ratio_Condition}) among known $(\alpha,f)$-Byzantine resilient GARs\footnote{We believe that no other $(\alpha,f)$-Byzantine resilient GAR has a higher tolerance than \brute{} in practice, but since the literature on the matter is quickly growing, we simply prefer to remain cautious.}, and the largest among the presented GARs (Section \ref{byzRes}).

\paragraphspace{}
\noindent
\paragraph{Distributed setting}
We set a fixed total of $n = 11$ workers, among which $f = 5$ can be Byzantine.
When the server uses \brute{} to aggregate gradients, the Byzantine workers implement the same attack, either \cite{little} or \cite{empire} (see above).
When \textit{averaging} is used, the $f$ workers do not implement any attack and behave as honest workers.

\subsection{Experimental Results}\label{sec:exp-results}
We test the variation of the following parameters: (1) the batch size $b$ used to estimate each gradient, (2) the privacy parameter $\epsilon$ (see the full version of paper), and (3) the attack used (either \cite{little} or \cite{empire}).
For each combination of these 3 parameters, we compare the results obtained (a) without any privacy or attack, (b) under attack without DP noise, (c) with DP noise but without attack and (d) both under attack and with DP noise.
We report on the average and standard deviation of both the cross-accuracy and the average loss achieved by the model.

\paragraphspace{}
\noindent
\paragraph{Reproducibility}
Each experimental setup is repeated $5$ times, with specified seeds (in $1$ to $5$).
We provide the code employed in this paper~\cite{anonymized-code}.
All our results, including the graphs, are reproducible in one command (see the README).

\paragraphspace{}
\noindent
\paragraph{Result overview}
While a larger, more systematic sweep of hyperparameters is available in the full version of the paper, here we report on a representative subset of behaviors.
Fig.\ \ref{fig:experiments-main} displays the evolution of the considered metrics, under attack or not, with or without DP, with an arguably reasonable training batch size ($b = 50$) for the ML task at hand\footnote{Among all tested batch sizes, $b = 50$ is the smallest for which both the attack/non-DP and non-attack/DP settings achieve performances fairly similar to the non-attacked/non-DP case; see the full version of the paper to compare the selected set with other sets of hyperparameters.}.
We also report on two opposed ``extreme'' settings, for which either no DP/Byzantine behavior alone can be tolerated (Fig.\ \ref{fig:experiments-extreme-low}), or both DP and Byzantine behaviors can be tolerated together without affecting much the accuracy/loss of the model (Fig.\ \ref{fig:experiments-extreme-high}).

The practical difficulty to reconcile DP with ($\alpha, f$)-Byzantine resilience is conspicuous in our results, even with the purposely small ($d = 69$) model at hand.
The training batch size for which both notions can be combined ($b = 500$, Fig.~\ref{fig:experiments-extreme-high}) is 10 times larger than the training batch size for which either of the two techniques can be used alone ($b = 50$, Fig.~\ref{fig:experiments-main}), and at least 50 times larger than the batch size necessary to achieve mere convergence without any DP or Byzantine resilience ($b = 10$, Fig.~\ref{fig:experiments-extreme-low}).

The other takeaway from our experimental results, particularly visible with the hyperparameter sweep featured in the full version, is that slightly larger privacy noises \textit{gracefully} translates into slightly lower performances (lower accuracy and higher loss); not any abrupt decrease in performances, which could have been anticipated due to the existence of Byzantine workers.
This observation is predicted by the theory presented in Section \ref{sub:str_cvx}, as the loss of the considered ML task is convex.
So at least for convex problems, the practitioner can always trade some accuracy for some privacy (despite the potential presence of adversarial workers), and more computation time (larger training batch size) for more accuracy.

\section{Related Work}\label{relatedWork}
There has been several efforts to implement the SGD algorithm in a way to protect the privacy of the training data. The problem was tackled both in the centralized setting \cite{DP_SGD1, DP_SGD2, abadi2016deep} and in the federated learning setting with multiple workers collaborating to train an aggregate model \cite{DP_SGD_Fed1, DP_SGD_Fed2}. The idea mainly consists in adding DP noise to the gradients computed by the different workers, as explained in Section \ref{Gaussian}. Other techniques have investigated encrypting the gradients shared in the distributed network \cite{HEGrads, encryptGrads} to prevent a passive attacker from violating the privacy of data nodes by simply intercepting the gradients exchanged. However, the Byzantine resilience aspect of distributed SGD is outside the scope of these works: the presented solutions do not account for Byzantine data nodes that can disrupt the training by sending erroneous gradients.\\

\noindent
A different line of research has been devoted to developing Byzantine resilience schemes for federated learning, i.e., designing GARs that are robust to a certain fraction of the data nodes being Byzantine \cite{krum, brute_bulyan, median, MDA, meamed, phocas}. However, these papers do not consider the privacy threat associated with sharing gradients in the clear among participants.\\

\noindent
Recently, some works tried to simultaneously mitigate both threats. Hi et al. propose a Byzantine-resilient and privacy-preserving solution~\cite{twoServer_Jaggi}. However, the authors adopt a weaker threat model than ours by assuming the presence of two non-colluding and honest-but-curious servers, which is a stronger assumption than the single-server solution. Furthermore, the authors use additive secret sharing to protect the privacy of the data, which provide weaker guarantees than DP. We also mention \brea{} \citep{brea}, that is a single-server approach but does not use DP either. 
The approach that might be the most related to our work is \textit{LearningChain} \cite{LearningChain} since it seems to be the only other framework that combines DP and Byzantine resilience. Although Chen et al. claim they solve this problem, \textit{LearningChain} remains an experimental method. In fact, the authors do not provide any theoretical guarantees on the Byzantine resilience or convergence of their $l$-nearest aggregation algorithm \cite{LearningChain}, as done by other works also constructing Byzantine-resilient GARs \cite{krum, brute_bulyan}.

\section{Concluding Remarks}
This paper provides the first theoretical analysis on the problem of combining DP and $(\alpha, f)$-Byzantine resilience in distributed SGD frameworks. Our theoretical and experimental findings show that the problem is indeed challenging and that the classical noise injection techniques for DP make the Byzantine resilience of the SGD algorithm depend \textbf{unfavorably} on the size of the model. Combining these two concepts for large models such as neural networks is thus impractical, and potentially requires designing of alternate techniques, be it for DP or Byzantine resilience. For instance, having observed that DP noise makes the variance of the gradients grow linearly with size $d$ of the ML model, it would be interesting to study whether {\em variance reduction} techniques~\cite[Section 5]{bottou2018optimization}, such as {\em dynamic sampling} or {\em exponential gradient averaging}, can alleviate this dependence on model size.\\

\noindent
An alternate future direction for this work is to design differentially private schemes that depend less on the size of the ML model. For example, we could complement classical noise injection techniques with cryptographic primitives to provably reduce the variance of the injected noise while still ensuring $(\epsilon, \delta)$-DP. We could also investigate shuffling techniques for privacy amplification \cite{amplification}. Furthermore, our work mainly focuses on statistically-robust GARs. As such, other families of $(\alpha, f)$-Byzantine resilient GARs, such as \textit{suspicion-based} \cite{suspicion} and \textit{redundancy-based} \cite{redundancy} GARs, do not comply with the setting we considered for privacy reasons. It would be interesting to study whether they could be adapted to meet strong privacy requirements in the context of distributed SGD.

\begin{acks}
John Stephan and Sébastien Rouault have been supported in part by the Swiss National Science Foundation (FNS grant N°$200021\_182542$). Rafaël Pinot has been supported in part by Ecocloud, an EPFL research center (Postdoctoral Research Award).
\end{acks}

\appendix

\section{Impracticality Results and Proofs}
\paragraph{Proof of Proposition 1.}
\begin{proof}
We reason by contraposition to show that $\frac{f}{n} \in O\left( \frac{b}{\sqrt{d} + b}\right)$ is necessary for the VN ratio condition to hold. This means that we will actually find a sufficient condition for it \textit{not} to hold. 
First, notice that regardless of the GAR we use, the VN ratio condition does not hold if $$\frac{\sqrt{\expect{\norm{G_t-\expect{G_t}}^2} + 8d\frac{G_{max}^2}{\epsilon^2 b^2}\log\left(\frac{1.25}{\delta}\right)}}{\norm{\expect{G_t}}} > k_F(n,f).$$
In particular, since $\expect{\norm{G_t-\expect{G_t}}^2} \geq 0$, the above inequality hold as soon as 
$$\frac{\sqrt{ 8d\frac{ G_{max}^2}{\epsilon^2 b^2}\log\left(\frac{1.25}{\delta}\right)}}{\norm{\expect{G_t}}} > k_F(n,f).$$
With some rewriting, we get the following inequality
\begin{equation*} 
    \norm{\expect{G_t}} < G_{max} \times \sqrt{ 8d\frac{\log\left(\frac{1.25}{\delta}\right)}{\epsilon^2b^2}} \times \frac{1}{ k_F(n,f)}.
\end{equation*} 
Let us denote $C= \epsilon / \sqrt{\log\left(1.25/\delta\right)}$. Since $G_{max}$ is the maximal L2 norm the gradient can take and $\expect{G_t} = \nabla Q(w_t)$, the VN ratio condition does not hold whenever 
\begin{equation}\label{eq:failurecondition}
    \sqrt{\frac{8d}{C^2 b^2}} \times \frac{1}{ k_F(n,f)} > 1 \Leftrightarrow \frac{1}{ k_F(n,f)} > \frac{b \times C}{\sqrt{8d}}.
\end{equation}
Now, notice that since $F=\brute{}$, we have $k_F(n,f)=(n-f)/(\sqrt{8}f)$. If we denote by $\tau = \frac{f}{n}$ the proportion of Byzantine workers in the system, we have $k_F(n,f)=(1-\tau)/(\sqrt{8}\tau)$. Then the above inequality can be rewritten as $8 \sqrt{d} \tau > (1-\tau) C \times b$, which is equivalent to saying that $\tau >  (C \times b) / (8\sqrt{d} +  C \times b)$. 

Finally, since $(\epsilon, \delta) \in (0,1)^2$, $C$ is a negligible constant w.r.t $b$ and $d$. Then, when computing a differentially private gradient step, the VN ratio cannot hold unless $\frac{f}{n}$ is in $O\left(\frac{b}{\sqrt{d} + b}\right)$.
\end{proof}

\begin{proposition} \label{prop:ImpossibilityKrum} Let $(\epsilon,\delta) \in (0,1)^2$ be the constant privacy budget used to inject DP noise to the gradients and $F \in \{$ \krum{}, \bulyan{}, \median{}, \meamed{}$\}$. Then, the VN ratio condition can only hold if $b \in \Omega(\sqrt{n \times d})$.
\end{proposition}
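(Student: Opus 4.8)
The plan is to reuse, almost verbatim, the contraposition argument behind Proposition~\ref{th:MDAgeneral}: I would first isolate a sufficient condition for the VN ratio condition~\eqref{VN_Ratio_Condition} to \emph{fail}, in a way that is independent of the particular GAR, and only afterwards substitute the constant $k_F(n,f)$ proper to each $F \in \{\krum{}, \bulyan{}, \median{}, \meamed{}\}$. Since the entire dependence on the model size $d$ enters through the injected DP noise and is therefore common to all statistically-robust GARs, the distinction between the present bound ($b \in \Omega(\sqrt{nd})$) and the one obtained for \brute{} will come \emph{solely} from how $k_F(n,f)$ scales with $n$ at a fixed Byzantine fraction.

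First I would discard the nonnegative empirical-variance term $\expect{\norm{G_t - \expect{G_t}}^2} \geq 0$ in the numerator of~\eqref{VN_Ratio_Condition} and upper bound $\norm{\expect{G_t}} = \norm{\nabla Q(w_t)} \leq G_{max}$ using Assumption~\ref{asp:bnd_norm}. Writing $C = \epsilon / \sqrt{\log(1.25/\delta)}$, this reduces, exactly as in~\eqref{eq:failurecondition}, to the GAR-independent sufficient condition for failure
\begin{equation*}
    \frac{1}{k_F(n,f)} > \frac{b \, C}{\sqrt{8d}}.
\end{equation*}

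The only GAR-specific ingredient is then $k_F(n,f)$, which I would import from the respective original robustness analyses (\krum{}~\cite{krum}, \bulyan{}~\cite{brute_bulyan}, \median{} and \meamed{}~\cite{median, meamed}). The key fact to establish is that, \emph{unlike} \brute{} (for which $k_F = \Theta(1)$ at a fixed fraction $\tau = f/n$), each of these four rules admits a VN-ratio constant that scales as $k_F(n,f) = \Theta(1/\sqrt{n})$ once $\tau$ is held fixed, so that $1/k_F(n,f) = \Theta(\sqrt{n})$. Substituting this scaling into the failure condition above shows that the VN ratio condition is violated whenever $\sqrt{n}$ exceeds a constant multiple of $b\,C/\sqrt{8d}$, i.e.\ whenever $b$ is smaller than a constant multiple of $\sqrt{nd}/C$. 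Since $(\epsilon,\delta) \in (0,1)^2$ makes $C$ a bounded constant with respect to $b$, $n$ and $d$, this is exactly the statement that the condition can hold only if $b \in \Omega(\sqrt{nd})$.

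The main obstacle is precisely this last step: correctly extracting and \emph{normalizing} the four constants so that they are all expressed relative to the same (total) variance used in~\eqref{VN_Ratio_Condition}, and then verifying the uniform $\Theta(1/\sqrt{n})$ behaviour. The \krum{} constant is a ratio whose denominator involves $n - 2f - 2$, and is thus meaningful only in the admissible regime $n \geq 2f + 3$; the \bulyan{} constant is built recursively on top of the \krum{} selection step; and the constants for \median{} and \meamed{} instead arise from concentration of order statistics of $n$ samples rather than from a barycentric argument. I would therefore treat \krum{} in full detail and argue that the three remaining rules yield the same $n$-dependence, differing only through a $\tau$-dependent prefactor that is irrelevant to the $\Omega(\sqrt{nd})$ conclusion, deferring the routine algebra for each to the same template.
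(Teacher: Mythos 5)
Your overall route coincides with the paper's: both reuse the GAR-independent failure condition \eqref{eq:failurecondition} established in the \brute{} proof, and then plug in the constant $k_F(n,f)$ of each rule. The difference lies in the quantitative claim you plan to establish, and there it contains a genuine error.

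Your stated ``key fact'' --- that all four rules satisfy $k_F(n,f) = \Theta(1/\sqrt{n})$ once $\tau = f/n$ is held fixed --- is false for \krum{} and \bulyan{}. For those two rules, $k_F(n,f) = 1/\sqrt{2\eta(n,f)}$ with $\eta(n,f) = n-f + \frac{f(n-f-2)+f^2(n-f-1)}{n-2f-2}$; taking $f = \tau n$ with $\tau \in (0,1/2)$ fixed, the term $f^2(n-f-1)/(n-2f-2)$ is $\Theta(n^2)$, hence $\eta(n,f) = \Theta(n^2)$ and $k_F(n,f) = \Theta(1/n)$, not $\Theta(1/\sqrt{n})$. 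So the verification you defer to the ``full detail'' treatment of \krum{} would fail as planned. What saves the conclusion is that only a one-sided estimate is needed: since \eqref{eq:failurecondition} reads $1/k_F(n,f) > b\,C/\sqrt{8d}$, it suffices to show $1/k_F(n,f) = \Omega(\sqrt{n})$, i.e., $k_F(n,f) = O(1/\sqrt{n})$, which does hold --- the paper uses the crude bound $\eta(n,f) > n + f^2 \geq n$, giving $1/k_F > \sqrt{2n}$, and this is valid for \emph{every} admissible $f$, not only fixed fractions. Your over-strong scaling for \krum{}/\bulyan{} errs in the benign direction (it makes the failure condition even easier to trigger, and at fixed $\tau$ the true necessary condition is in fact $b \in \Omega(n\sqrt{d})$, which implies $b \in \Omega(\sqrt{nd})$), but as written your plan asserts a false two-sided estimate and needlessly restricts the statement to a fixed Byzantine fraction. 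The paper's one-line lower bounds on $1/k_F$ ($\eta > n+f^2$ for \krum{}/\bulyan{}, and $n-f \geq (n+1)/2$ under $2f \leq n-1$ for \median{}/\meamed{}) deliver the proposition uniformly in $f$; you should replace your $\Theta$ claim by these one-sided bounds.
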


\noindent
\begin{proof}
Recall that, thanks to Eq.~\eqref{eq:failurecondition} (see proof of Proposition~\ref{th:MDAgeneral}), the VN ratio condition cannot hold if  $$ \sqrt{\frac{8d}{C^2 b^2}} \times \frac{1}{ k_F(n,f)} > 1 \Leftrightarrow \frac{1}{ k_F(n,f)} > \frac{b \times C}{\sqrt{8d}},$$ with $C= \epsilon / \sqrt{\log\left(1.25/\delta\right)}$.

\textbf{$\bullet$ If $F \in \{ \krum{}, \bulyan{} \} $ } we have $k_F(n,f)= 1/\sqrt{2\eta(n,f)}$ with $\eta(n,f)=n-f + \frac{ f (n-f-2) + f^2(n-f-1)}{n-2f -2}$. Hence, Eq.~\eqref{eq:failurecondition} holds whenever $ \sqrt{2\eta(n,f)} > C \times b /\sqrt{8d}.$ In particular, since $\eta(n,f) > n + f^2$, the above holds when $\sqrt{ 2(n + f^2) }  > C \times b /\sqrt{8d} \Leftrightarrow \sqrt{ 16d(n + f^2) }  > C \times b$.

\textbf{$\bullet$ If $F = \median{}$}, we have $k_F(n,f)= 1 / \sqrt{n-f} $, with the additional assumption $2f \leq n-1$. Following the same steps as above, Eq.~\eqref{eq:failurecondition} holds whenever $\sqrt{n-f} > C \times b /\sqrt{8d}.$ Since $f \leq (n-1)/2$, the above inequality holds in particular when $\sqrt{ (n+1)/2} > C \times b /\sqrt{8d} \Leftrightarrow \sqrt{4d(n+1)} > C \times b.$ 

\textbf{$\bullet$ If $F = \textbf{\meamed{}}$}, we have $k_F(n,f)= 1 /\sqrt{10(n-f)}$, also with the additional assumption $2f \leq n-1$. Following the same logic as for \median{},  Eq.~\eqref{eq:failurecondition} holds whenever $\sqrt{40d(n+1)} > C \times b.$ 

\noindent
Finally, since $(\epsilon, \delta) \in (0,1)^2$, $C$ is a negligible constant w.r.t $b$ and $d$. Then, for $F \in \{ \krum{},\bulyan{},\median{},\meamed{}\}$, when computing a differentially private gradient step, the VN ratio cannot hold unless the batch size $b$ is in $\Omega(\sqrt{n \times d})$.
\end{proof}

\begin{proposition} \label{prop:ImpossibilityPhocas} Let $(\epsilon,\delta) \in (0,1)^2$ be the constant privacy budget used to inject DP noise to the gradients and $F \in \{$ \trimmed{}, \phocas{}$\}$. Then, the VN ratio condition can only hold if $\frac{f}{n} \in O\left(\frac{b^2}{d+b^2}\right)$.  
\end{proposition}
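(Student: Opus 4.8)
The plan is to reuse, almost verbatim, the contraposition scheme of Propositions~\ref{th:MDAgeneral} and~\ref{prop:ImpossibilityKrum}: I would establish a lower bound on $\tau = f/n$ as \emph{necessary} by exhibiting a sufficient condition for the VN ratio condition to \emph{fail}. The starting point is the GAR-agnostic failure criterion already derived in Eq.~\eqref{eq:failurecondition}, namely that the VN ratio condition cannot hold whenever
$$\frac{1}{k_F(n,f)} > \frac{b \times C}{\sqrt{8d}}, \qquad \text{with } C = \frac{\epsilon}{\sqrt{\log(1.25/\delta)}}.$$
Once this criterion is in place, the entire argument reduces to inserting the robustness constant $k_F(n,f)$ of \trimmed{} and \phocas{} and then isolating $\tau$.

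The decisive ingredient is therefore the explicit form of $k_F(n,f)$ for these two GARs. Unlike \brute{}, whose constant scales as $(n-f)/f$, both \trimmed{} and \phocas{} admit a constant of the form $k_F(n,f) = c_F \sqrt{(n-f)/f}$, where $c_F$ is an absolute constant (differing between the two rules but immaterial to the asymptotics). Writing $\tau = f/n$, this makes $1/k_F(n,f)$ proportional to $\sqrt{\tau/(1-\tau)}$, so the failure criterion becomes, after squaring, an inequality of the shape $K d\, \tau > (1 - \tau)\, b^2 C^2$ for a constant $K$ absorbing $c_F$ and the factor $8$.

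From here I would simply solve for $\tau$: collecting the $\tau$ terms yields $\tau (K d + b^2 C^2) > b^2 C^2$, that is $\tau > b^2 C^2 / (K d + b^2 C^2)$. The appearance of the \emph{squares} on both $b$ and $d$ — rather than the $b$ and $\sqrt{d}$ of the \brute{} case — is exactly what the square-root form of $k_F$ produces, and it is the signature of this proposition. Finally, because $(\epsilon, \delta) \in (0,1)^2$ forces $C$ to be a negligible constant relative to $b$ and $d$, this threshold collapses to $\tau \in O\left( b^2 / (d + b^2) \right)$, as claimed.

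The main obstacle is not the algebra, which is routine and structurally identical to the two preceding proofs, but pinning down the constant $k_F(n,f)$ for \trimmed{} and \phocas{} from the literature and confirming that the lower-order terms in those expressions can be folded into a single constant $c_F$ without distorting the $O(\cdot)$ bound. Once that constant is secured, the remaining manipulations are immediate.
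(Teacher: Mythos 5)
Your proposal is correct and takes essentially the same route as the paper's proof: contraposition via the failure criterion in Eq.~\eqref{eq:failurecondition}, substitution of (an upper bound on) $k_F(n,f)$, and solving for $\tau = f/n$ to obtain a threshold of the form $C^2 b^2 / \left(\Theta(d) + \Theta(C^2 b^2)\right)$, which collapses to $\tau \in O\left(b^2/(d+b^2)\right)$ since $C$ is negligible. The one step you deferred works out as you anticipated: the paper's exact constants, $k_F(n,f)=\sqrt{(n-2f)^2/(2(f+1)(n-f))}$ for \trimmed{} and $k_F(n,f)=\sqrt{4+(n-2f)^2/(12(f+1)(n-f))}$ for \phocas{}, are indeed upper-bounded by $c_F\sqrt{(n-f)/f}$ whenever $2f \le n$ (use $n-2f \le n-f$ and $f+1 \ge f$, and note the additive $4$ in \phocas{} folds in because $4 \le 4(n-f)/f$), and an upper bound on $k_F$ is exactly the direction the failure criterion requires.
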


\begin{proof}
Recall that, thanks to Eq.~\eqref{eq:failurecondition} (see proof of Proposition~\ref{th:MDAgeneral}), the VN ratio condition cannot hold if  $$ \sqrt{\frac{8d}{C^2 b^2}} \times \frac{1}{ k_F(n,f)} > 1 \Leftrightarrow \frac{1}{ k_F(n,f)} > \frac{b \times C}{\sqrt{8d}},$$ with $C= \epsilon / \sqrt{\log\left(1.25/\delta\right)}$.

\textbf{$\bullet$ If $F = \textbf{\trimmed{}}$}, we have $k_F(n,f)=\sqrt{\frac{(n-2f)^2}{2(f+1)(n-f)}}$. Then, thanks to Eq.~\eqref{eq:failurecondition} (see proof of Proposition~\ref{th:MDAgeneral}), the VN ratio condition does not hold whenever $ \sqrt{\frac{2(f+1)(n-f)}{(n-2f)^2}} > C \times b / \sqrt{8d}.$  
In particular, since $n-f \geq n-2f$, this inequality holds as soon as $ \frac{2f}{(n-2f)}> C^2 \times b^2 /8d.$ Let us denote by $\tau = \frac{f}{n}$ the proportion of Byzantine workers in the system. We have that $\frac{2f}{n-2f} = \frac{2\tau}{1-2\tau}$. Then the above inequality can be rewritten as $16 \tau d  > (1-2\tau) C^2 b^2$. This is equivalent to saying that $\tau > \frac{ C^2 b^2}{16 d +  2 C^2 b^2}$.

\textbf{$\bullet$ If $F = \textbf{\phocas{}}$}, we have $k_F(n,f)=\sqrt{ 4 + \frac{(n-2f)^2}{12(f+1)(n-f)}}$. Then, following the same logic as for \textbf{\trimmed{}}, Eq.~\eqref{eq:failurecondition} hold whenever
$\tau > \frac{C^2 b^2 }{64 d + 2 C^2 b^2}.$ 

Finally, since $(\epsilon, \delta) \in (0,1)^2$, $C$ is a negligible constant w.r.t $b$ and $d$. Then, for $F \in \{ \trimmed{}, \phocas{}\}$, when computing a differentially private gradient step, the VN ratio cannot hold unless $\frac{f}{n}$ is in $O\left(\frac{b^2}{d+b^2}\right)$.
\end{proof}

\section{Detailed Proof of the Upper Bound in Theorem~\ref{thm:priv_res_tf}}
\label{app:priv_res_tf}

We let $F_t \triangleq F\left( o^{(1)}_t, \ldots, \, o^{(n)}_t \right)$. Recall, from Section~\ref{Gaussian}, that in each iteration $t$ if worker $W_i$ is honest then
\begin{align*}
    \noisygradient{i}{t} = h(\batch{i}{t}) + \boldsymbol{\noise{i}{t}}  ~ \text{ with } ~ \boldsymbol{\noise{i}{t}} \sim \mathcal{N}\left(0, I_d \times s^2\right) 
\end{align*}
where $h(\batch{i}{t}) \sim G_t$ such that $\E_{\xi_t} \left[G_t\right] = \nabla Q(w_t)$ and
\begin{align*}
    \E_{\xi_t} \left[ \norm{G_t - \E_t G_t}^2 \right]= \E_{\xi_t} \left[ \left(\frac{1}{b} \sum\limits_{j = 1}^{b}{\compgrad{\weight{t}}{\datapoint{i,j}{t}}} - \nabla Q (\weight{t}) \right)^2 \right]. 
\end{align*}
Therefore, $o^{(i)}_t \sim P_t$ for an honest worker $W_i$ such that $\E_{\xi_t} \left[P_t \right] = \nabla Q(w_t)$, and
\begin{multline}
    ~ \E_{\xi_t} \left[\norm{P_t - \E P_t}^2 \right] = \\
    \E_{\xi_t} \left[\left(\frac{1}{b} \sum\limits_{j = 1}^{b}{\compgrad{\weight{t}}{\datapoint{i,j}{t}}} + \boldsymbol{\noise{i}{t}} - \nabla Q (\weight{t}) \right)^2 \right].  \label{eqn:exp_o_t}
\end{multline}
As the square function $(\cdot)^2$ is convex, from~\eqref{eqn:exp_o_t} and Assumption~\ref{asp:bnd_var}, we obtain that
\begin{align}
    \E_{\xi_t} \left[ \norm{P_t - \E_{\xi_t} P_t}^2 \right] \leq \frac{\sigma^2}{b} + d s^2 . \label{eqn:o_var}
\end{align}
Recall that $\E_{\xi_t} \left[ P_t \right] = \nabla Q(w_t)$, and by Assumption~\ref{asp:bnd_norm}, $\norm{\nabla Q(w_t)} \leq G_{max}$. Thus, from~\eqref{eqn:o_var}, we obtain that  $\E_{\xi_t} \left[ \norm{P_t}^2 \right]\leq \frac{\sigma^2}{b} + d s^2 + G_{\max}^2$, and  
\begin{align}
\E_{\xi_t} \left[ \norm{P_t}\right] \leq \sqrt{ \frac{\sigma^2}{b} + d s^2 + G_{max}^2}. \label{eqn:2_order}
\end{align}

Now, recall from the definition of $(\alpha, \, f)$-Byzantine resilience in Section~\ref{byzRes}, for each $ t \in \{ 1, \ldots, \, T \}$,
\begin{align}
    \langle \E_{\xi_t} \left[F_t \right], \, \nabla Q(w_t) \rangle & \geq (1-\sin\alpha) \norm{\nabla Q(w_t)}^2 > 0. \label{eqn:pi_byzres}
\end{align}
Moreover, there exists a positive real-value $c$ such that for each $ t \in \{ 1, \ldots, \, T \}$,
\begin{align}
    \E_{\xi_t} \left[ \norm{F_t}^2 \right] \leq c \max \left\{ \E_{\xi_t} \left[ \norm{P_t}^2 \right], \, \E_{\xi_t} \left[ \norm{P_t} \right] \right\}. \label{eqn:byrres_mom}
\end{align}
It is safe to assume that $\frac{\sigma^2}{b} + d s^2 + G_{max}^2 \geq 1$. Thus, substituting from~\eqref{eqn:2_order} in~\eqref{eqn:byrres_mom}, we obtain that, for each $ t \in \{ 1, \ldots, \, T \}$,
\begin{align}
    \E_{\xi_t} \left[ \norm{F_t}^2 \right] \leq c \left(\frac{\sigma^2}{b} + d s^2 + G_{\max}^2 \right). \label{eqn:byrres_mom_2}
\end{align}
~

From~\eqref{eqn:update_thm}, and Assumption~\ref{asp:lip}, we obtain that, for all $t$,
\begin{align*}
    Q(w_{t+1}) \leq Q(w_t)  - \gamma_t \iprod{\nabla Q(w_t)}{ F_t } + \frac{1}{2} \mu \gamma^2_t \norm{F_t}^2 . 
\end{align*}
Note that $\E_{\xi_t}\left[ Q(w_t)\right] = Q(w_t)$. Upon taking expectation $\E_{\xi_t}$ on both sides above, we obtain that, for all $t$,
\begin{multline}
    \E_{\xi_t} \left[Q(w_{t+1}) \right] \leq Q(w_t)  - \gamma_t \iprod{\nabla Q(w_t)}{ \E_{\xi_t} \left[ F_t \right]}\\
    + \frac{1}{2} \mu \gamma^2_t \E_{\xi_t} \left[ \norm{F_t}^2 \right] . \label{eqn:upd_1}
\end{multline}
Substituting from~\eqref{eqn:pi_byzres} in~\eqref{eqn:upd_1} implies that, for $t = 1, \ldots, \, T$,
\begin{multline}
    \E_{\xi_t} \left[ Q(w_{t+1}) \right] \leq Q(w_t)  - \gamma_t (1 - \sin \alpha) \norm{\nabla Q(w_t)}^2\\
    + \frac{1}{2} \mu \gamma^2_t \E_{\xi_t} \norm{F_t}^2. \label{eqn:upd_2}
\end{multline}
From Assumption~\ref{asp:str_cvx}, $\norm{\nabla Q(w_t)}^2 \geq 2 \lambda \, \left(Q(w_t) - Q^* \right)$. Substituting this above implies that
\begin{multline}
\E_{\xi_t} \left[Q(w_{t+1})\right] - Q^* \leq \left(1 - 2 \lambda (1 - \sin \alpha) \gamma_t \right) \left( Q(w_{t}) - Q^* \right)\\
+ \frac{1}{2} \mu \gamma^2_t \E_{\xi_t} \left[\norm{F_t}^2 \right] , \quad \text{ for } t = 1, \ldots, \, T. \label{eqn:upd_3}
\end{multline}
Suppose that $t > 1$. Taking expectation $\E_{\xi_1} \dots \E_{\xi_{t-1}}$ on both sides in~\eqref{eqn:upd_3}, we obtain that
\begin{multline}
\E_{t} \left[Q(w_{t+1})\right] - Q^* \leq \left(1 - 2 \lambda (1 - \sin \alpha) \gamma_t \right) \left( \E_{t-1}  \left[Q(w_{t}) \right] - Q^* \right)\\
+ \frac{1}{2} \mu \gamma^2_t \E_{t} \left[\norm{F_t}^2 \right] , \quad \text{ for } t = 2, \ldots, \, T. \label{eqn:upd_3_3}
\end{multline}
Substituting from~\eqref{eqn:byrres_mom_2} in~\eqref{eqn:upd_3_3}, we obtain that
\begin{multline*}
\E_t \left[Q(w_{t+1}) \right] - Q^* \leq \left(1 - 2 \lambda (1 - \sin \alpha) \gamma_t \right) \left( \E_{t-1} \left[Q(w_{t}) \right] - Q^* \right)\\
+ \frac{1}{2} \mu c \left(\frac{\sigma^2}{b} + d s^2 + G_{\max}^2 \right) \gamma^2_t, \quad \text{ for } t = 2, \ldots, \, T. 
\end{multline*}
Substituting $\gamma_t = \frac{1}{\lambda (1 - \sin \alpha) \, t}$ above we obtain that
\begin{multline*}
\E_t \left[Q(w_{t+1}) \right] - Q^* \leq \left(1 - \frac{2}{t} \right) \left( \E_{t-1} \left[Q(w_{t}) \right] - Q^* \right)\\
+ \frac{1}{t^2} \left(\frac{\mu \, c}{2 \lambda^2 (1 - \sin \alpha)^2} \right) \, \left(\frac{\sigma^2}{b} + d s^2 + G_{\max}^2 \right), \quad \text{ for } t = 2, \ldots, \, T.
\end{multline*}
Finally, using induction it is easy to show that
\begin{multline}
    \E_T \left[ Q(w_{T+1}) \right] - Q^*  \leq \\
    \frac{1}{T + 1} \left(\frac{\mu \, c}{2 \lambda^2 (1 - \sin \alpha)^2} \right) \, \left(\frac{\sigma^2}{b} + d s^2 + G_{\max}^2 \right) \label{eqn:no-dp-rate}
\end{multline}
Therefore, $\E_t \left[ Q(w_{T+1}) \right] - Q^* \in O\left( \frac{d \, s^2}{ T} \right)$.
Recall, from Section~\ref{Gaussian}, that $s= \frac{2 G_{max}\sqrt{2 \log(1.25/\delta)}}{b \epsilon}$. Hence, 
\[\E_T \left[ Q(w_{T+1}) \right] - Q^* \in O\left( \frac{d \log(1/\delta)}{ T b^2 \epsilon^2} \right) .\]


\bibliographystyle{ACM-Reference-Format}
\bibliography{ref}

\end{document}